\documentclass[11pt]{article}
\usepackage{fullpage}




\usepackage{authblk}

\date{}

\title{\bfseries\papertitle}

\author[1]{Aldo Pacchiano$^*$}
\author[2]{Aadirupa Saha$^{*\#}$}
\author[3]{Jonathan Lee}
\affil[1]{Microsoft Research, New York City}
\affil[2]{TTI Chicago}
\affil[3]{Stanford University}

\usepackage[round]{natbib}
\usepackage[T1]{fontenc}
\usepackage{lmodern}
\usepackage[utf8]{inputenc} 
\usepackage{nameref}
\usepackage[colorlinks, linkcolor=blue, filecolor = blue, citecolor = blue, urlcolor = blue]{hyperref}
\usepackage{url}            
\usepackage{booktabs}       
\usepackage{amsfonts}       
\usepackage{nicefrac}       
\usepackage{microtype}      
\usepackage{xcolor}         


\usepackage{graphics}
\usepackage{amssymb}
\usepackage{latexsym}
\usepackage{amsthm}
\usepackage{amsmath}
\usepackage{xcolor}
\usepackage{mathtools}
\usepackage{enumitem}
\usepackage{thm-restate}
\usepackage{algorithm}
\usepackage{algorithmic}
\usepackage{graphicx}
\usepackage[normalem]{ulem}
\usepackage{caption}
\usepackage{subcaption}

\setlength{\parindent}{0pt}
\setlength{\parskip}{9pt}

\newtheorem{thm}{Theorem}

\newtheorem{rem}[thm]{Remark}


%
\newcommand{\R}{{\mathbb R}}

\newcommand{\N}{{\mathbb N}}

\newcommand{\E}{{\mathbf E}}
\newcommand{\I}{{\mathbf I}}
\newcommand{\1}{{\mathbf 1}}

\newcommand{\cA}{{\mathcal A}}

\newcommand{\cB}{{\mathcal B}}

\newcommand{\cS}{{\mathcal S}}

\newcommand{\V}{{\mathbf V}}

\renewcommand{\u}{{\mathbf u}}
\renewcommand{\v}{{\mathbf v}}
\newcommand{\w}{{\mathbf w}}
\newcommand{\x}{{\mathbf x}}

\newcommand{\mrl}{Preference based Linear-RL}

\newcommand{\algrrl}{LPbRL}

\newcommand{\papertitle}{Dueling RL: Reinforcement Learning with\\ Trajectory Preferences}



\usepackage{amsmath,amsfonts,amsthm,bm}
\usepackage{bbm}

\newcommand{\comment}[1]{}

\newtheorem{lemma}{Lemma}

\newtheorem{assumption}{Assumption}


\renewcommand{\SS}{\Pi}
\newcommand{\CC}{\mathcal{C}}


\newcommand{\eq}[1]{\begin{align}#1\end{align}}
 






\def\eqref#1{equation~\ref{#1}}









\def\ceil#1{\left\lceil #1 \right\rceil}

\def\1{\mathbbm{1}}







\def\rmV{{\mathbf{V}}}





\DeclareMathAlphabet{\mathsfit}{\encodingdefault}{\sfdefault}{m}{sl}
\SetMathAlphabet{\mathsfit}{bold}{\encodingdefault}{\sfdefault}{bx}{n}















\DeclareMathOperator*{\argmax}{arg\,max}
\DeclareMathOperator*{\argmin}{arg\,min}

\renewcommand{\eqref}[1]{Eq. (\ref{#1})}


\newtheorem{definition}{Definition}

\theoremstyle{definition}

\newcommand{\aldo}[1]{\textcolor{red}{\it {\bf Aldo:} {#1}}}

\newcommand{\lee}[1]{\textcolor{blue}{\it {\bf Jon:} {#1}}}

\makeatletter
\newcommand{\pushright}[1]{\ifmeasuring@#1\else\omit\hfill$\displaystyle#1$\fi\ignorespaces}
\newcommand{\pushleft}[1]{\ifmeasuring@#1\else\omit$\displaystyle#1$\hfill\fi\ignorespaces}
\makeatother

\begin{document}

\maketitle

\def\thefootnote{*}\footnotetext{Equal contribution alphabetically.}
\def\thefootnote{\#}\footnotetext{Author is currently with Apple ML Research. Majority of the work was done when the author was at MSR, NYC and TTI, Chicago.}

\begin{abstract}
We consider the problem of preference-based reinforcement learning (PbRL), where, unlike traditional reinforcement learning (RL), an agent receives feedback only in terms of 1 bit (0/1) preferences over a trajectory pair instead of absolute rewards for it.
%
%
The success of the traditional reward-based RL framework crucially depends on how accurately a system designer can express an appropriate reward function, which is often a non-trivial task.
The main novelty of the our framework is the ability to learn from preference-based trajectory feedback that eliminates the need to  hand-craft numeric reward models.  
%
This paper sets up a formal framework for the PbRL problem with non-Markovian rewards, where the trajectory preferences are encoded by a generalized linear model of dimension $d$. 
Assuming the transition model is known, we propose an algorithm with a regret guarantee of $\tilde {\mathcal{O}}\left( SH d  \log (T / \delta) \sqrt{T}  \right)$.
 We further extend the above algorithm to the case of unknown transition dynamics and provide an algorithm with  regret $\widetilde{\mathcal{O}}((\sqrt{d} + H^2 + |\mathcal{S}|)\sqrt{dT} +\sqrt{|\mathcal{S}||\mathcal{A}|TH} )$.
To the best of our knowledge, our work is one of the first to give tight regret guarantees for preference-based RL problem with trajectory preferences. 
\end{abstract}



\vspace{-10pt}
\section{Introduction}
\label{sec:intro}

Classical reinforcement learning (RL) with absolute reward feedback is a well-studied framework which is a sequential experience-driven learning process to optimize an accumulated long-term reward %
 \citep{sutton18,ucrl,singh02}. 
Over the years, several works have addressed RL in terms of both the optimal sample complexity for finding the best policy \citep{az13,dan15,dan17,valko20,torpac} and minimizing regret via balancing exploration and exploitation  \citep{zhang19,az17,ortner20,talebi18,efroni20,domingues20}.

However, a major limitation of the standard RL setting is that its success crucially depends on the prior knowledge encoded into the definition of the reward
function. The learned policy can often be sensitive to small changes of the reward, possibly yielding very different behaviors depending on the relative values of the rewards. The choice of reward function in applications such as robotics consequently entails a high amount of non-trivial effort in reward engineering, leading to challenges such as reward shaping, reward hacking, infinite rewards, and multi-objective outcomes \citep{wirth13,wirth17}.

The framework of \emph{Preference-based Reinforcement Learning} (PbRL) \citep{busa14,wirth16,wirth17} has been proposed as a fix to this problem, to enforce learning from non-numerical, relative feedback which need not suffer from issues due to the inaccuracy of reward modeling or engineering. 
This framework widely applies to multiple areas including robot training, stock-prediction, recommender systems, clinical trials, etc. \citep{sui19,sadigh17,nipsPRL17,kupcsik18,jain13,wirth17}. 

While the problem of PbRL was introduced almost a decade ago, most work in it has been primarily applied or experimental in nature \citep{jain13,busa14,nipsPRL17,wirth13,wirth16,wirth17,kupcsik18}. There have also been attempts to design suitable algorithms based on varying preference models and problem objectives \citep{sui19,xu20}, but, to the best of our knowledge, existing theoretical guarantees on PbRL literature are sparse. The performance guarantees of most of the proposed algorithms are not well-understood \citep{wirth17,xu20} except for some very recent attempts \citep{sui19,xu20} as discussed below in the section on related work. 
We consider the problem of provably finding the best finite-horizon policy (i.e., one with highest expected reward) for an unknown Markov decision process (MDP), but with only relative preference feedback on $H$-length trajectories.

\textbf{Problem Setup (informal). } Consider a $T$-round, $H$-horizon MDP $(\mathbb P, \mathcal S, \mathcal A, H)$, with $\cS$ and $\cA$ being the finite sets of states and actions, respectively, and $\mathbb P$ representing the transition dynamics of the MDP. 
We consider a real-valued score function $s(\tau)$, which is neither known nor queryable, that scores a given trajectory $\tau$. We assume preference of any two trajectories $\tau_1$ and $\tau_2$ is determined by their underlying score difference, i.e. $P(\tau_1 \succ \tau_2) = \sigma\big( s(\tau_1) - s(\tau_2) \big)$, $\sigma: \R \mapsto [0,1]$ being a suitable link-function. In particular, we assume $s(\cdot)$ is an (unknown) linear function of the trajectory-feature $\phi(\tau) \in \R^d$, and the link-function $\sigma$ is the sigmoid \cite{li17}. {The goal is to minimize the regret with respect to the optimal policy}. 

An important thing to note is that in our setting the trajectory features $\phi(\tau) \in \R^d$ are not necessarily  sum-decomposable (over individual state-action features of the trajectory) and the underlying reward function is non-Markovian. In this case, the optimal policy may be history dependent. This is more general than assuming the trajectory reward is a linear function of the sum of per-state features, e.g. in \cite{sui19}. Under the latter more limiting assumption, the traditional linear bandit techniques can be easily used to derive regret guarantees. Since the number of history dependent policies is super exponential, to deal with this more general setting we first show the log-covering number of the history dependent policies that are an optimal policy of an MDP with a trajectory score specified by our form of trajectory feedback is upper bounded by a polynomial quantity. 
%
Our specific contributions are as follows:

\begin{enumerate}
	\item To the best of our knowledge, we are the first to formulate and analyze the finite time regret guarantee for preference-based linear bandits problem with non-Markovian reward models (Sec. \ref{sec:prob}). 
	
	\item We propose an algorithm for known transition dynamics which is shown to yield a regret guarantee of $\tilde {\mathcal{O}}\left( S H d  \log (T / \delta) \sqrt{T}  \right)$ (Sec. \ref{sec:known-model}). \footnote{The notation $\widetilde{\mathcal{O}}(\cdot)$ hides logarithmic factors in $T, H, |\mathcal{A}|, |\mathcal{S}|$.} 
	
	\item We further generalize our algorithm to the case of unknown models and propose an algorithm with regret guarantee $\widetilde{\mathcal{O}}((\sqrt{d} + H^2 + |\mathcal{S}|)\sqrt{dT} +\sqrt{|\mathcal{S}||\mathcal{A}|TH} )$ (Sec. \ref{sec:unknown-model}).
\end{enumerate}

\textbf{Related Work.} 
Over the last two decades the problem of learning from preference feedback in bandits, known as \textit{dueling bandits}, has gained much attention \citep{Yue+12,Zoghi+14RCS,Zoghi+15}. Dueling bandits  generalizes the standard multi-armed bandit (MAB) \citep{Auer+02}. The goal is to identify a set of 'good' arms from a larger fixed set of arms by querying preference feedback for pairs of actively chosen arms.
\cite{Yue+09,BTM,RDB,GS21,SG18}
The setting is relevant in various real-world systems which aim to collect information from user preferences, including recommender systems, retail management, search engine optimization, job scheduling, etc. 
Towards these goals, several algorithms have been proposed \citep{Ailon+14,Zoghi+14RUCB,Komiyama+15,Adv_DB,SGrank18,SGwin18}.

Though there has been a fair amount of research for preference-based bandits (no state information), few works consider incorporating preference feedback in the reinforcement learning (RL) framework, which considers the problem of long-term objectives over a markov decision process \citep{singh02,ng06,talebi18,ortner20, zhang19,zanette19}. However the classical RL setup assumes access to reward feedback for each state-action pair which might be impractical in many real world scenarios.  
Few very recent works considered  training RL agents based on general trajectory-based reward which are available only at the end of each trajectory \citep{Efroni21,chatterji21}, but their setting still assumes access to absolute reward feedback, unlike the case in PbRL. 
Some initial works consider the applied PbRL problem inspired by the problems of reward hacking,  reward shaping,  difficulty to model infinite rewards or  multi-objective trade-offs \citep{busa14,wirth16,wirth17,nipsPRL17} etc. 

\cite{sui19} made the first attempt to analyze the finite $T$-round regret guarantee for the PbRL problem with trajectory preference feedback, where the learner is allowed to run two independent trajectories in parallel and receive $0/1$ preference feedback after every such $h$-length roll out. Assuming an underlying MDP model, the preference between two $h$-length trajectories is modeled as being proportional to the accumulated reward of the corresponding trajectories. The authors propose a Double Posterior Sampling (DPS) technique with asymptotically sublinear regret. 

\cite{xu20} models reward-free trajectory preferences and analyses the sample complexity of finding the $\epsilon$-best-policy. Their proposed algorithm crucially depends on an underlying dueling bandit black box whose performance guarantee is restricted to preference structures like Strong Stochastic Transitivity and Stochastic Triangle Inequality. Furthermore, the algorithms proposed in this work are not shown to enjoy provably optimal sample complexity, and, moreover, the fundamental performance limit of sample complexity is also not explicitly analyzed.

The literature of multi-agent reinforcement learning in Markov games closely relates to the setup of PbRL which attempts the problem of reaching Nash equilibrium of a simultaneous move markov game based on per-state win-loss feedback of the two (or multiple) players. \cite{bai20,bai+20,liu21} address the problems from finite action two player markov games, while \cite{xie+20} extended this setting to zero sum games with linear function approximation. However all these works analyzed the episodic sample complexity of the learning algorithm towards finding an $\epsilon$-approximate Nash equilibrium which is fairly unrelated to the regret objective of PbRL problem we considered in this paper. 

%
Another closely related sub-field of RL, \emph{imitation learning}, addresses the objective of learning optimal behavior from trajectories suggested by an expert. In \cite{ng2000algorithms,boularias2011relative,neu2012apprenticeship,wulfmeier2015maximum}, inverse reinforcement learning problems have been considered, where the objective is to extract (unknown) reward function from the trajectories given by an oracle or expert. Once the reward functions are computed, any RL algorithm could, in principle, be applied to compute the optimal policy. \citet{ho2016generative} propose a generative adversarial network based imitation learning algorithm that computes the optimal policy directly from the trajectories of expert. Our work is fundamentally different in the sense that we do not receive trajectories or optimal actions from an expert. Instead, we get preferences over sample trajectories that are posed as queries to a system expert for preference feedback. 


\section{Problem Setup}
\label{sec:prob}
\vspace{-10pt}

\textbf{Notation. } Let $[n]$ denote the set $\{1,2, \ldots n\}$. Given a set $A$, for any two items $x,y \in A$, we denote that $i$ is preferred over $j$ by $x \succ y$. By $\cB_r(d)$ we denote the $\ell_2$-norm ball of radius $r$ in dimension $d$. 
Lower case bold letters denote vectors, upper case bold letters denote matrices.

\textbf{RL Model.} Consider a $T$-episode, $H$-horizon RL setup $(\mathbb P, \mathcal S, \mathcal A, H, \rho)$, $\cS$ is a finite set of states, $\cA$ is a set of actions, $\mathbb P(\cdot \mid s,a)$ is the MDP transition dynamics given a state and
action pair $(s; a)$, $H \in \N$ is the length of an episode, $\rho$ denotes the initial distribution over states. 

We denote a trajectory by concatenation of all states and actions visited during $H$ steps $\tau:= (s_1, a_1, \cdots , s_{H}, a_H)$. In general let $\tau_{h:H}:= (s_h, a_h, \cdots , s_{H}, a_H)$ denote the states and action from step $h$ until the end of the episode.  We denote by $\tau_h$ to be all the states and actions taken up to step $h$ and define $\tau_0 = \emptyset$.
Let $\Gamma$ be the set of all possible trajectories of length $H$, similarly $\Gamma_h$ denotes the set of all sub-trajectories up to step $h$. We use the  superscript $t$ as in $\tau^t$ to denote a trajectory sampled during the $t-$th episode. At the start of each episode, we assume the initial state $s_1$ is drawn from a fixed distribution $\rho$ known to the learner apriori (for example concentrated on an initial state $s_0$).



\textbf{Trajectory embedding.} For any trajectory $\tau$ we assume the existence of a trajectory embedding function $\phi : \Gamma  \rightarrow \mathbb{R}^d$. We denote by $\phi(\tau)$ to the $d-$dimensional embedding of trajectory $\tau$. The map $\phi$ is known to the learner.
One special case of such a trajectory-dependent feature map is a decomposed embedding, where $\phi(\tau) =  \sum_{h=1}^H  \phi(s_h, a_h)$ and $\phi: \mathcal{S} \times \mathcal{A} \rightarrow \mathbb{R}^d$ is a mapping from state-actions pairs to $\mathbb{R}^d$. Examples of these trajectory embeddings can be borrowed from the Behavior Guided class of algorithms for policy optimization found in~\cite{pacchiano2020learning}. Many practically relevant trajectory or state-action embeddings can be found defined in~\cite{pacchiano2020learning,parker2020effective}. It is conceivable the preference model may be based on one of these embedding maps.

\textbf{Policy embedding.} The above feature embedding also leads to a natural mean embedding of any policy $\pi: \cS \mapsto \cA$ given by $\phi(\pi):=\E_{\tau \sim \pi}[\phi(\tau)]$.
 
\textbf{Preference modeling.} Assuming $\w^* \in \R^d$ to be an unknown vector, we define the pairwise-preference of trajectory $\tau_1$ over $\tau_2$ as: 
\vspace{-5pt}
\begin{align}
\begin{split}
\label{eq:pref_logistic}
 & \mathbb{P}( \tau_1 \succ  \tau_2 )  = \sigma( \langle \phi(\tau_1) - \phi(\tau_2), \w^* \rangle )  \\
 & = \frac{\exp(\phi(\tau_1)^\top \w^*)}{\exp(\phi(\tau_1)^\top \w^*) + \exp(\phi(\tau_2)^\top \w^*)} 
 \end{split}
\end{align}

where $\sigma: \R \mapsto [0,1]$ is the logistic link function, i.e. $\sigma(x) = (1+e^{-x})^{-1}$. We can `lift' the definition of a comparison from trajectories to policies by setting,
\begin{align}
\begin{split}
\label{eq:pref_logistic_policies}
 & \mathbb{P}( \pi_1 \succ  \pi_2 )  = \sigma( \langle \phi(\pi_1) - \phi(\pi_2), \w^* \rangle ) 
 \end{split}
\end{align}
Equation~\ref{eq:pref_logistic} says the probability of any trajectory $\tau_1$ being preferred over $\tau_2$ is essentially proportional to the score difference of the individual trajectories, assuming the score for any trajectory $\tau$ is defined as the function \[s(\tau):= \langle \phi(\tau),\w^*\rangle.\] 
The linear score of any policy $\pi$ (expectation over trajectories) can be similarly defined as $s(\pi):= \E_{\tau \sim \pi}[\langle \phi(\tau),\w^*\rangle]$ and therefore $\mathbb{P}( \pi_1 \succ  \pi_2  ) = \sigma(  s(\pi_1) - s(\pi_2))$.

\textbf{Non markovian policy class. }The performance of all our algorithms will be measured against the policy that maximizes $s(\pi)$. Since $s(\tau) $ may be a non-markovian function of the trajectory, the policy optimizing this objective need not be markovian. We therefore set $\Pi$ as the set of all history dependent policies. In contrast with standard markovian RL works, this is one of the main sources of technical complexity of our setting.


\begin{assumption}
\label{assumption::bounded_parameter}[Bounded parameter]
We assume that $\| \w^*\|\leq W$ for some known $W > 0$.
\end{assumption}

\begin{assumption}
\label{assumption::bounded_features}[Bounded feature maps]
For all trajectories $\tau$ we assume that $\| \phi(\tau)\|\leq B$ for some known $B > 0$.\footnote{Note $B$ could essentially depend on the trajectory-length $H$.}
\end{assumption}

\begin{definition}\label{assump:kappa}
The degree
of non-linearity of the sigmoid $\sigma$ over the parameter space (denoting the first derivative of $\sigma$ by $\sigma'$) is given by 
$$
\kappa:= \sup_{\x \in \cB_B(d), \w \in \cB_S(d)} \frac{1}{{\sigma'}(\w^\top\x)}.
$$
\end{definition}

\textbf{Objective:} Alternative: The objective of the learner is to minimize regret by finding policies to maximize the sum of their expected scores over $T$ rounds. At each round $t$, the learner proposes two policies, $\pi^1_t$ and $\pi^2_t$, which are executed in the MDP generating trajectories $\tau_t^1$ and $\tau_t^2$. The learner then receives feedback in the form of the Bernoulli variable $o_t \in \{0, 1\}$ which specifies whether $\tau^1_t$ is preferred ($o_t = 1$) or $\tau^2_t$ is preferred ($o_t = 0$). The preference feedback $o_t$ is distributed according to $P(\tau_t^1 \succ \tau^2_t)$. We measure the learner's performance via its pseudo-regret w.r.t. policy class $\Pi$, which we define as:
\vspace{-10pt}
\begin{align}
\label{eq:reg1}
\nonumber R_T^{\text{scr}} &:= \max_{\pi \in \Pi}\sum_{t = 1}^T\frac{\big[ (2\phi(\pi) - \phi(\pi_t^1) - \phi(\pi_t^2))^\top \w^* \big] }{2}\\
&= \sum_{t = 1}^T \frac{2s(\pi^*) - \big(s(\pi_t^1)+s(\pi_t^2)\big)}{2},
\end{align}
\vspace{0pt}
where $\pi^*:= \max_{\pi \in \Pi}s(\pi)$. This essentially measures the performance of the learner at round $t$ in terms of average score of the played policies $\pi_t^1, \pi_t^2$ w.r.t. the score maximizing policy $\pi^*$.

\begin{rem}
An important thing to note is that representing any trajectory pair $(\tau_1,\tau_2)$ by the feature $\big(\phi(\tau_1)-\phi(\tau_2)\big) \in \R^d$, our underlying preference model is similar to reward model of \cite{chatterji21} (see Assumption $2.1$). The fundamental difference between our setting and that of~\cite{chatterji21} is the nature of the dueling feedback. In our work, the only way to gather any information when interacting with the world is by comparing the trajectories of two different policies. This adds a layer of complexity not present in the per trajectory feedback model from~\cite{chatterji21}, that makes their algorithms not immediately applicable to our setting. 
\end{rem}

One may think of using our preference model (Equation~\ref{eq:pref_logistic_policies}) to define an alternative notion of regret:
\vspace{-10pt}
\begin{align}
\label{eq:reg2}
R_T^{\text{pref}} := \max_{\pi \in \Pi} \sum_{t = 1}^T \frac{\mathbb{P}(\pi \succ \pi_t^1) + \mathbb{P}(\pi \succ \pi_t^2) - 1}{2}.
\end{align}
\vspace{0pt}
Fortunately, these two notions of regret can be shown to be `equivalent' in the following sense,

 \textbf{Claim 1.} Let $\pi^* \in \argmax_{\pi \in \Pi}s(\pi)$. Then $\pi^*$ also achieves the $\max$ in Eqn. \ref{eq:reg2}.

The logistic link function is increasing w.r.t. its argument, thus for any $\pi, \pi'$ we have $\mathbb{P}(\pi^* \succ \pi' ) = \sigma(s(\pi^*) - s(\pi') )  \geq  \sigma(s(\pi) - s(\pi') ) =   \mathbb{P}(\pi \succ \pi ')$. It follows that for all $t$ and all $\pi$: 
$$\mathbb{P}(\pi^* \succ \pi_t^1 ) + \mathbb{P}(\pi^* \succ \pi_t^2 ) \geq \mathbb{P}(\pi \succ \pi_t^1 ) + \mathbb{P}(\pi \succ \pi_t^2 )  $$
thus establishing the claim.

This argument can also be used to show  $R_T^{\text{scr}}$ and $R_T^{\text{pref}}$ are equivalent up to constant factors when $B,W \leq 1$. The proof is given in Appendix \ref{app:prob}.

\textbf{Claim 2.} $\frac{R_T^{\text{scr}}}{2(e + 1)} \le R_T^{\text{pref}} \le \frac{R_T^{\text{scr}}}{2}$.

We conclude that a strategy that attains sublinear $R_T^{\text{scr}}$ regret also has sublinear $R_T^{\text{perf}}$ regret.

\section{Preference-Based Learning with Known Model}
\label{sec:known-model}
In this section, we introduce and analyze an algorithm for solving the preference-based RL problem when the transition model, $\mathbb P$, that governs the probability of transitioning to a next state is known to the learner. In this case, it becomes possible to directly compute expected features induced by policies; however, the \emph{difficulty of learning based only on preference feedback as opposed to rewards remains}. This is because we have access to feedback only through relative preferences on the trajectories rather than an assumed known reward function. Before stating the algorithm, we first detail a method of estimating the underlying parameter $\w_*$ in the logistic model. This procedure serves as a basis for the algorithm.

\subsection{Maximum Likelihood Estimation}
In the logistic model, a natural way of computing an estimator $\w_t$ of $\w^*$ given trajectory pairs $\{(\tau_\ell^1, \tau_\ell^2 )\}_{\ell=1}^{t-1}$ and preference feedback values $\{ o_\ell \}_{\ell=1}^{t-1}$ is via maximum likelihood estimation. At time $t$ the regularized log-likelihood (or negative cross-entropy loss) of a parameter $\w$ can be written as:
\begin{align*}
    \mathcal{L}_t^\lambda(\w) &= \sum_{\ell=1}^{t=1}\left(o_\ell \log(\sigma( \langle \phi(\tau_\ell^1) - \phi(\tau_\ell^2) \right) , \w \rangle  )) -\frac{\lambda}{2} \| \w \|_2^2\\
     &+(1-o_\ell) \log\left( 1- \mu(  \langle \phi(\tau_\ell^1 ) -\phi(\tau_\ell^2) , \w\rangle )   \right)  ,
\end{align*}
where $\lambda > 0$ is a regularization parameter. 
The function $\mathcal{L}_t^\lambda$ is strictly concave for $\lambda> 0$. The maximum likelihood estimator $\widehat{\w}_t^{\mathrm{MLE}}$ can be written as $\widehat{\w}_t^{\mathrm{MLE}} = \argmax_{\w \in \mathbb{R}^{d}} \mathcal{L}_t^{\lambda}(\w)$. Unfortunately, $\widehat{\w}_t^{\mathrm{MLE}}$ may not satisfy the boundedness Assumption~\ref{assumption::bounded_parameter}, so we instead make use of a projected version of $\widehat{\w}_t^{\mathrm{MLE}}$. Following \cite{faury2020improved}, and recalling Assumption \ref{assump:kappa}, we define a data matrix and a transformation of $\widehat{\w}_t^{\mathrm{MLE}}$ given by
\begin{align*}
    \rmV_t  & = \kappa \lambda \mathbb{I}_d  + \sum_{\ell=1}^{t-1} \left( \phi(\tau_\ell^1) - \phi(\tau_\ell^2)  \right)\left( \phi(\tau_\ell^1) - \phi(\tau_\ell^2) \right)^\top \\
    g_t(\w) & = \sum_{\ell=1}^{t-1} \sigma(\langle \phi(\tau_\ell^1) - \phi(\tau_\ell^2), \w\rangle ) \left( \phi(\tau_\ell^1) - \phi(\tau_\ell^2)\right) + \lambda  \w
\end{align*}
Then, the projected parameter, along with its confidence set, is given by
\begin{align}\label{eq::projection}
    \w_t^L & = \argmin_{\w ~\text{s.t.}~ \|\w \|\leq W} \| g_t(\w) - g_t(\widehat{\w}_t^{\mathrm{MLE}} ) \|_{\rmV_t^{-1}} \\
    \label{eq::c_t}
    \mathcal{C}_t(\delta) & = \{ \w \text{ s.t. } \| \w  - \w^L_t\|_{\rmV_t} \leq 2\kappa \beta_t(\delta) \}
\end{align}


where $\beta_t(\delta) = \sqrt{\lambda} W + \sqrt{\log(1/\delta) + 2d\log\left(1 + \frac{tB}{\kappa \lambda d}\right)}$. We restate a bound by~\cite{faury2020improved} that shows the probability of $\w_\star$ being in $\mathcal{C}_t(\delta)$ for all $t\geq 1$ can be lower bounded.

\begin{lemma}
\label{lemma::confidence_interval_anytime} [Lemma 1 from \cite{faury2020improved}\footnote{A slight modification in the expression of $\beta_t(\delta)$ is needed to incorporate the fact that we assume $\|\phi(\tau)\| \leq B$ for any $\tau$ (Assumption \ref{assumption::bounded_features}) while $B = 1$ in \cite{faury2020improved}. But this can be easily incorporated using Thm. $1$ and Lem. $10$ of \cite{abbasi2011improved} in the final step of the proof of Lem. $12$ of \cite{faury2020improved}}] Let $\delta \in (0,1]$ and define the event that $\w_\star$ is in the confidence interval $\mathcal{C}_t(\delta)$ for all $t \in \mathbb{N}$:
\begin{equation*}
    \mathcal{E}_\delta = \{ \forall t \geq 1,  \w_\star \in \mathcal{C}_t(\delta) \}.
\end{equation*}
Then $\mathbb{P}( \mathcal{E}_\delta ) \geq 1-\delta$. 
\end{lemma}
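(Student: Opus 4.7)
The plan is to adapt the argument of Lemma 12 in Faury et al.\ (2020) with only a minor modification to accommodate $\|\phi(\tau)\|\leq B$ instead of $\|\phi(\tau)\|\leq 1$. Write $\mathbf{x}_\ell := \phi(\tau_\ell^1) - \phi(\tau_\ell^2)$ and $\varepsilon_\ell := o_\ell - \sigma(\langle \mathbf{x}_\ell, \w_\star\rangle)$. First, the first-order optimality condition $\nabla_\w \mathcal{L}_t^\lambda(\widehat{\w}_t^{\mathrm{MLE}}) = 0$ yields the clean identity
\begin{equation*}
g_t(\widehat{\w}_t^{\mathrm{MLE}}) - g_t(\w_\star) \;=\; \sum_{\ell=1}^{t-1}\varepsilon_\ell\,\mathbf{x}_\ell \;-\; \lambda\w_\star,
\end{equation*}
in which $\{\varepsilon_\ell\}$ is a $(1/2)$-sub-Gaussian martingale-difference sequence (since $o_\ell\in\{0,1\}$) adapted to the history of observed trajectory pairs.

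Second, I would apply the anytime self-normalized concentration inequality of Abbasi-Yadkori et al.\ (2011) to $\sum_{\ell} \varepsilon_\ell \mathbf{x}_\ell$ with design matrix $\rmV_t$. The only deviation from the unit-norm case arises at the AM-GM determinant bound, which here becomes $\det(\rmV_t) \leq (\kappa\lambda + tB^2/d)^d$ because $\|\mathbf{x}_\ell\|\leq 2B$; this delivers the correct $B$-dependence inside the logarithmic term of $\beta_t(\delta)$ (up to absorbed constants). Combined with $\|\lambda\w_\star\|_{\rmV_t^{-1}}\leq \sqrt{\lambda}\,S$ (using $\rmV_t\succeq \lambda I$), the triangle inequality gives
\begin{equation*}
\|g_t(\widehat{\w}_t^{\mathrm{MLE}}) - g_t(\w_\star)\|_{\rmV_t^{-1}} \;\leq\; \beta_t(\delta) \qquad \text{for all } t\geq 1,
\end{equation*}
with probability at least $1-\delta$.

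Third, I would close the loop using the projection step. Since $\|\w_\star\|\leq S$ is feasible in \eqref{eq::projection}, the defining property of $\w_t^L$ gives $\|g_t(\w_t^L) - g_t(\widehat{\w}_t^{\mathrm{MLE}})\|_{\rmV_t^{-1}}\leq \|g_t(\w_\star) - g_t(\widehat{\w}_t^{\mathrm{MLE}})\|_{\rmV_t^{-1}}\leq \beta_t(\delta)$, and the triangle inequality then yields $\|g_t(\w_t^L) - g_t(\w_\star)\|_{\rmV_t^{-1}}\leq 2\beta_t(\delta)$. To convert this into a bound in the $\rmV_t$-norm, write $g_t(\w_t^L) - g_t(\w_\star) = G_t(\w_t^L - \w_\star)$ with $G_t := \int_0^1 \nabla g_t\bigl(v\w_t^L + (1-v)\w_\star\bigr)\,dv$; since each integrand is of the form $\sum_\ell \sigma'(\langle \mathbf{x}_\ell, \cdot\rangle)\mathbf{x}_\ell \mathbf{x}_\ell^\top + \lambda I$, Assumption \ref{assump:kappa} ensures $G_t \succeq \rmV_t/\kappa$. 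Applying Cauchy--Schwarz in the $(\rmV_t,\rmV_t^{-1})$ dual-norm pair to $(\w_t^L - \w_\star)^\top G_t(\w_t^L - \w_\star) = (\w_t^L - \w_\star)^\top(g_t(\w_t^L) - g_t(\w_\star))$ then delivers $\|\w_t^L - \w_\star\|_{\rmV_t}\leq \kappa\,\|g_t(\w_t^L) - g_t(\w_\star)\|_{\rmV_t^{-1}} \leq 2\kappa\beta_t(\delta)$, i.e., $\w_\star \in \mathcal{C}_t(\delta)$.

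The main obstacle is carefully tracking the $B$-dependence through the determinant-based concentration step so that the exponent matches the stated $\beta_t(\delta)$; the remainder is the standard combination of MLE optimality, self-normalized martingale concentration, and $\kappa$-strong-monotonicity of $g_t$ on the feasible ball, which is precisely why invoking Lemma 12 of Faury et al.\ (2020) together with Theorem 1 and Lemma 10 of Abbasi-Yadkori et al.\ (2011) suffices to conclude.
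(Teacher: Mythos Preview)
Your proposal is correct and follows precisely the approach the paper invokes: the paper does not give its own proof but simply cites Lemma~12 of \cite{faury2020improved} together with Theorem~1 and Lemma~10 of \cite{abbasi2011improved} for the $B$-dependence, and your sketch faithfully reconstructs that argument (MLE first-order condition, self-normalized martingale bound on $\sum_\ell \varepsilon_\ell \mathbf{x}_\ell$, projection optimality, and the $\kappa$-strong monotonicity of $g_t$ via the integral mean-value representation). One very minor bookkeeping point: since $\mathbf{x}_\ell = \phi(\tau_\ell^1)-\phi(\tau_\ell^2)$ has norm at most $2B$ rather than $B$, the determinant bound and the application of Assumption~\ref{assump:kappa} both technically live on $\cB_{2B}(d)$; this only changes absorbed constants (as you already note) and does not affect the argument.
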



\subsection{Algorithm and Analysis}

We are now ready to state the Logistic Preference based Reinforcement Learning  (LPbRL) algorithm with known model, shown in Algorithm~\ref{alg:rrl}. Before any interaction or feedback, we initialize identical data matrices $\V_1 = \overline \V_1 = \kappa\lambda \I_d$, $\lambda > 0$ being a regularization parameter. $\V_t$, as defined before, is designed to track the exact covariates used in the maximum likelihood estimation. $\overline \V_t$ (Line~\ref{line::vtbar}) on the other hand tracks a similar quantity, but instead uses the expected features under a given policy. 

At each round $t$, we then compute an estimate $\w_t^L$ and determine a set of candidate policies $\SS_t$ for which no other policy $\pi$ significantly outperforms a member of $\SS_t$. The threshold for what constitutes ``significant'' is determined by the uncertainty in the estimate of $\w_t^L$. We then search over this set to identify two policies, $\pi_t^1$ and $\pi_t^2$, with expected features that maximize the uncertainty determined by $\overline \V_t$, precisely by choosing $(\pi_t^1, \pi_t^2) = \arg\max_{ \pi^1, \pi^2 \in \Pi_t} \| \phi(\pi^1) - \phi(\pi^2) \|_{\overline \V_t^{-1}}$. Both policies are deployed, inducing trajectories $\tau_t^1$ and $\tau_t^2$ and feedback $o_t$ is received. We then update the data matrices $\V_t$ and $\overline \V_t$ with the trajectory features $\phi(\tau_t^1 ) - \phi(\tau_t^2)$ and expected features $\phi(\pi_t^1) - \phi(\pi_t^2)$, respectively. The procedure is repeated for each round $t \in [T]$.

\begin{restatable}[]{theorem}{regk}
\label{thm::regret-known}
Let $\delta \leq 1/e$ and $\lambda \geq \frac{B}{\kappa}$. Then, with probability at least $1 - \delta$, the expected regret of Algorithm~\ref{alg:rrl} can be bounded by
\begin{equation*}
   R_t \leq \left(4\kappa \beta_t(\delta) + 2\alpha_{d,T}(\delta)  \right) \sqrt{2Td \log\left( 1+ \frac{TB}{ \kappa d}\right) } .
\end{equation*}
\end{restatable}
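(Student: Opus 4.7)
The proof will follow the optimism-in-the-face-of-uncertainty template for logistic bandits, combined with a martingale argument that reconciles the mismatch between the information matrix $\V_t$ (built from realized trajectory features, used for MLE) and the information matrix $\overline{\V}_t$ (built from expected policy features, used both for the optimistic set $\mathcal{S}_t$ and for the selection rule). I will first establish a joint high-probability event $\mathcal{E}$ on which, for every $t\in[T]$: (i) the confidence set $\mathcal{C}_t(\delta)$ contains $\w^*$, by Lemma~\ref{lemma::confidence_interval_anytime}; and (ii) a uniform martingale concentration holds that lets one replace the canonical $\V_t^{-1}$-norm appearing in the logistic confidence bound by the algorithmically convenient $\overline{\V}_t^{-1}$-norm, up to an additive $\alpha_{d,T}(\delta)\|\cdot\|_{\overline{\V}_t^{-1}}$ term. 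The sequence $X_\ell X_\ell^\top - \bar{X}_\ell \bar{X}_\ell^\top$ with $X_\ell = \phi(\tau_\ell^1)-\phi(\tau_\ell^2)$ and $\bar{X}_\ell = \phi(\pi_\ell^1)-\phi(\pi_\ell^2)$ forms a bounded matrix martingale difference sequence, and a Freedman/vector-Azuma argument combined with a covering of the unit sphere of $\mathbb{R}^d$ yields exactly the $20BS\sqrt{d\log(T(1+2T)/\delta)}$ constant encoded in $\alpha_{d,T}(\delta)$. A union bound gives $\mathbb{P}(\mathcal{E})\geq 1-\delta$.

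Conditioning on $\mathcal{E}$, I will show $\pi^*\in\mathcal{S}_t$: by optimality of $\pi^*$, $(\phi(\pi^*)-\phi(\pi))^\top \w^*\geq 0$ for all $\pi$, and adding/subtracting $\w_t^L$ combined with (i)--(ii) gives
\begin{equation*}
(\phi(\pi^*)-\phi(\pi))^\top \w_t^L + \bigl(2\kappa\beta_t(\delta)+\alpha_{d,T}(\delta)\bigr)\,\|\phi(\pi^*)-\phi(\pi)\|_{\overline{\V}_t^{-1}} \geq 0,
\end{equation*}
which is exactly the membership condition. The per-step regret bound then follows in two strokes. First, for each $i\in\{1,2\}$, the membership $\pi_t^i\in\mathcal{S}_t$ applied to the test point $\pi=\pi^*$, and one more use of (i)--(ii) to swap $\w_t^L$ back to $\w^*$, yields $(\phi(\pi^*)-\phi(\pi_t^i))^\top\w^*\leq 2\bigl(2\kappa\beta_t(\delta)+\alpha_{d,T}(\delta)\bigr)\|\phi(\pi^*)-\phi(\pi_t^i)\|_{\overline{\V}_t^{-1}}$. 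Second, since $(\pi_t^1,\pi_t^2)$ maximizes $\|\phi(\pi^1)-\phi(\pi^2)\|_{\overline{\V}_t^{-1}}$ over pairs in $\mathcal{S}_t$, the pair $(\pi^*,\pi_t^i)$ is a valid competitor, so $\|\phi(\pi^*)-\phi(\pi_t^i)\|_{\overline{\V}_t^{-1}} \leq \|\phi(\pi_t^1)-\phi(\pi_t^2)\|_{\overline{\V}_t^{-1}}$. Averaging over $i=1,2$ gives the instantaneous regret bound $r_t \leq \bigl(4\kappa\beta_t(\delta)+2\alpha_{d,T}(\delta)\bigr)\,\|\phi(\pi_t^1)-\phi(\pi_t^2)\|_{\overline{\V}_t^{-1}}$.

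Summing over $t$, Cauchy--Schwarz gives $R_T \leq \bigl(4\kappa\beta_T(\delta)+2\alpha_{d,T}(\delta)\bigr)\sqrt{T\sum_{t=1}^T\|\phi(\pi_t^1)-\phi(\pi_t^2)\|_{\overline{\V}_t^{-1}}^2}$, and because $\overline{\V}_{t+1}=\overline{\V}_t + (\phi(\pi_t^1)-\phi(\pi_t^2))(\phi(\pi_t^1)-\phi(\pi_t^2))^\top$ with $\|\phi(\pi_t^1)-\phi(\pi_t^2)\|\leq 2B$, the elliptical potential lemma (Lemma~11 of Abbasi-Yadkori, P\'al and Szepesv\'ari 2011) bounds the inner sum by $2d\log\!\bigl(1+TB/(\kappa d)\bigr)$, which plugs in to give the claimed bound. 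The choices $\lambda\geq B/\kappa$ and $\delta\leq 1/e$ are used to clean up the logarithmic factors inside $\beta_T(\delta)$.

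The main obstacle is step~(ii) of the good event: establishing that the randomness in sampling $\tau_\ell^i\sim\pi_\ell^i$ only inflates the logistic-bandit confidence bound by an additive $\alpha_{d,T}(\delta)\|\cdot\|_{\overline{\V}_t^{-1}}$ uniformly over all directions and all times. Standard self-normalized concentration is stated in terms of the empirical design matrix, whereas the algorithm needs to act on the expected design matrix $\overline{\V}_t$ because expected policy features are the objects the planner can optimize over; bridging these two matrices via a matrix-Freedman argument, and absorbing the resulting cross-term into a single constant with the correct $\sqrt{d\log T}$ scaling, is the novel technical ingredient of this proof and the step where most of the care is needed.
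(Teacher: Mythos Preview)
Your outline is essentially the paper's proof: a matrix-concentration argument plus covering establishes $\overline{\V}_t \preceq 2\V_t + O(B^2 d\log(T/\delta))\,\mathbb{I}$, which yields the $\overline{\V}_t$-norm confidence bound (the paper's Corollary~\ref{corollary::V_bar_bound}); then $\pi^*\in\mathcal{S}_t$, the optimistic-bonus inequality for $\pi_t^i\in\mathcal{S}_t$ tested at $\pi^*$, the pair-maximizer replacement, Cauchy--Schwarz, and the elliptical potential lemma finish exactly as you describe. One small correction to your step~(ii): the sequence $X_\ell X_\ell^\top - \bar X_\ell \bar X_\ell^\top$ is \emph{not} a martingale difference, since $\mathbb{E}_{\ell-1}[X_\ell X_\ell^\top]\neq \bar X_\ell\bar X_\ell^\top$; the paper instead applies a scalar Freedman bound to $\v^\top X_\ell X_\ell^\top\v - \mathbb{E}_{\ell-1}[\v^\top X_\ell X_\ell^\top\v]$ along each fixed direction, covers the sphere, and only then passes to $\overline{\V}_t$ via Jensen's inequality $\bar X_\ell\bar X_\ell^\top \preceq \mathbb{E}_{\ell-1}[X_\ell X_\ell^\top]$. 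Also, the condition $\lambda\geq B/\kappa$ is used not inside $\beta_T(\delta)$ but to make $\kappa\lambda\geq B$ so that each summand in the elliptical potential is at most~$1$.
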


Note there is no dependence on the size of the state or action spaces on account of the model being known in this setting. Furthermore,  we note that any dependence on the horizon $H$ is effectively accounted for in the size of the constant $B$ that bounds the norm of the trajectory features $\phi(\tau)$. For example, if $\phi(\tau)$ decomposes in a per-timestep fashion as $\phi(\tau) = \sum_{h \in [H]} \phi(s_h, a_h)$ where each $h$ satisfies $\| \phi(s_h, a_h)\| \leq B'$, then a trivial bound would give $B \leq B' H$. However, Assumption~\ref{assumption::bounded_features} allows for greater generality.
\begin{algorithm}[h]
   \caption{\textbf{\algrrl: Regret minimization (Known Model)}}
   \label{alg:rrl}
\begin{algorithmic}[1]
  \STATE {\bfseries input:} Regularization parameter $\lambda$, Learning rate $\eta_t >0$, exploration length $t_0>0$
  \STATE Define $\alpha_{d,T}(\delta) = 20 B W\sqrt{d \log(T(1 + 2 T)/\delta)} $ and $\gamma_t(\delta) =2\kappa\beta_t(\delta) + \alpha_{d, T}(\delta) $.
  \STATE Initialize $\overline \V_t = \kappa \lambda \mathbb{I}_d$ 
  \FOR{$t = 1,2,\ldots T$}
    {\small
    \STATE Compute $\w_t^L$ (see Eqn. \eqref{eq::projection})
    }
    \STATE Set $\SS_t = \{ \pi^1 | (\phi(\pi^1) - \phi(\pi))^\top \w_t^L + $
$$\gamma_t(\delta)  \| \phi(\pi^1) - \phi(\pi) \|_{\overline{\mathbf{V}}_t^{-1}} \geq 0 ~\forall \pi \}$$
\vspace{-.5cm}
\STATE Compute $$(\pi_t^1, \pi_t^2) = \arg\max_{ \pi^1, \pi^2 \in \Pi_t} \| \phi(\pi^1) - \phi(\pi^2) \|_{\overline \V_t^{-1}}.$$
\vspace{-.3cm}
    \STATE Sample $\tau_t^1 \sim \pi_t^1$ and $\tau_t^2 \sim \pi_t^2$.
    \STATE Play the duel $(\tau_t^1,\tau_t^2)$ and receive $o_t = \1(\tau_t^1 \text{ beats } \tau_t^2)$
    \STATE Update $$\overline \V_{t+ 1} = \overline \V_t + (\phi(\pi_t^1)-\phi(\pi_t^2 ))(\phi(\pi_t^1)-\phi(\pi_t^2))^\top$$ \label{line::vtbar}
    \vspace{-.4cm}
  \ENDFOR
\end{algorithmic}
\end{algorithm}

\begin{rem}
Theorem~\ref{thm::regret-known} shows that for a sufficiently large choice of the regularization parameter $\lambda$, the pseudo-regret of Algorithm~\ref{alg:rrl} is at most $R_t = {\mathcal{O}} \left( \left( W\sqrt{ \kappa B}   + W B\right)  d  \log (TB /\kappa \delta) \sqrt{T}  \right)$. Importantly, the regret  scales nearly optimally with $ d \sqrt T$ dependency given existing $\Omega(d\sqrt{T})$ lower bounds for linear bandits \citep{lattimore2020bandit} and known reductions between the standard and preference regret \cite{saha2021optimal}.  Assuming $\kappa$ to be constant, we pay the additional factors in $B$ and $W$ due to non-Markovian rewards which are only indirectly revealed to the learner in terms of preferences. 
\end{rem}

\subsection{Regret Analysis: Proof Sketch of Thm. \ref{thm::regret-known}}
We now sketch the proof of Theorem~\ref{thm::regret-known}. Details and proofs of supporting results can be found in Appendix~\ref{sec::known-cor-proof}. The \emph{main idea} of the proof is to ensure that $\Pi_t$ contains only candidate policies that are predicted to be ``sufficiently good'' under the learned model $\w_t^L$ using the size of the confidence set $\mathcal C_t(\delta)$. We must also verify that $\SS_t$ always contains the optimal policy $\pi^*$. Thus, as long as the set $\mathcal C_t(\delta)$ shrinks at a sufficiently fast rate, our algorithm will have sublinear regret.

However, in order to judge the uncertainty in predictions of the expected value $\phi(\pi)^\top \w_t^L$ of a policy $\pi$, we must relate the data matrix $\V_t$ that controls the accuracy of the learned parameter $\w_t^L$ (see Lemma~\ref{lemma::confidence_interval_anytime}), and its expected counterpart $\overline \V_t$ (used to define $\Pi_t$). The set $\Pi_t$ is characterized via $\overline \V_t$ because this way it allows us to relate it to the algorithm's regret, a quantity that depends on the expected features of the played policies.  Corollary~\ref{corollary::V_bar_bound} establishes that distances $\|\w_t^L - \w^* \|_{\overline \V_t}$ weighted by $\overline \V_t$ are not too far from the same distances $\|  \w_t^L - \w^* \|_{ \V_t}$ weighted by $\V_t$. Let
\begin{align*}
    \mathcal E_{prec} = \left\{ \overline \V_T  \preceq  2\V_T + 84 B^2 d \log((1 + 2 T)/\delta) \mathbb{I}_d \right\}.
\end{align*}



\begin{restatable}{corollary}{corollaryconfidenceinterval}
\label{corollary::V_bar_bound}
Under Assumption~\ref{assumption::bounded_parameter}, conditioned on event $\mathcal E_\delta \cap \mathcal E_{prec}$, for any $t \in [T]$ 
\begin{align*}
    \| \w^* - \w^L_t \|_{  \overline \V_t } & \leq 4 \kappa \beta_t(\delta) + \alpha_{d, T}(\delta),
\end{align*}
where  $\alpha_{d,T}(\delta) = 20 B W\sqrt{d \log(T(1 + 2 T)/\delta)}$.  
Furthermore, if $\delta \leq 1/e$, then $\mathbb P (\mathcal E_\delta \cap \mathcal E_{prec}) \geq 1 - \delta - \delta\log_2T$.
\end{restatable}


The proof of above is given in Appendix \ref{sec::known-theorem-proof}. Leveraging this relationship, we can establish that the confidence set of policies $\Pi_t$ defined in line 5 of Algorithm~\ref{alg:rrl} will contain the optimal policy.

\begin{restatable}{lemma}{lemcontainsoptimal}\label{lem::contains-optimal}
Conditioned on event $\mathcal E_\delta \cap \mathcal E_{prec}$, $\pi^* \in \SS_t$,
\end{restatable}



The remainder of the proof now consists of showing the instantaneous regret can be bounded in terms of the size of the confidence sets and the uncertainty values $\| \phi(\pi^1_t) - \phi(\pi^2_t) \|_{\overline \V_t}$. We defer the final details to Appendix~\ref{sec::proof-known-theorem}.

\section{Unknown model: Algorithm and Analysis}
\label{sec:ub}
\label{sec:unknown-model}

\textbf{Algorithm description. }
%
The LPbRL algorithm for unknown dynamics models works in a similar way to Algorithm~\ref{alg:rrl}. The main differences lay in the definition of the set $\Pi_t$. Whereas in Algorithm~\ref{alg:rrl} this set of policies can be defined without taking into account the model uncertainty, in this case the set of policies to optimize over needs to be carefully constructed in such a way that it can be shown to contain $\pi_\star$ (see Lemma~\ref{lem::contains-optimal_unknown}). With this in mind we start by introducing the necessary technical tools that will be used throughout this section to deal with model uncertainty.

\subsection{Analysis of instantaneous regret:}

For any policy $\pi$ and any MDP model $\mathbb{P}$, we denote by $\phi^{\mathbb{P}}(\pi)$ to the mean feature of policy $\pi$ in model $\mathbb{P}$. We use the notation $N_{t}(s,a)$ to denote the number of samples of action $a$ at state $s$ the learner has collected up to time $t$. We use the notation $\widehat{\mathbb{P}}_t$ to denote the empricial model at time $t$. We use an 'empirical' version of $\overline{\mathbf{V}}_t$ defined using the average features computed using the model available at time $t$:
\begin{small}
\begin{equation}
    \widetilde{\rmV}_t    = \kappa \lambda \mathbb{I}_d  + \sum_{\ell=1}^{t-1} \left( \phi^{\widehat{\mathbb{P}}_\ell}(\pi_\ell^1) - \phi^{\widehat{\mathbb{P}}_\ell}(\pi_\ell^2)  \right)\left( \phi^{\widehat{\mathbb{P}}_\ell}(\pi_\ell^1) - \phi^{\widehat{\mathbb{P}}_\ell}(\pi_\ell^2) \right)^\top  
\end{equation}
\end{small}
Our confidence intervals will use a Mahalanobis norm defined by this covariance matrix. Throughout this section we will make heavy use of some of the results from~\cite{chatterji21}. With that in mind we will define a variety of bonus terms. Given any $\eta > 0$ define,
\begin{align*}
\pushleft{\xi^{(t)}_{s,a}(\eta, \delta) = \min\left(2\eta, 4\eta \sqrt{\frac{U}{N_t(s,a)} }\right)  }  \\
\text{s.t. } U = H\log(|\mathcal{S}||\mathcal{A}|) + \log\left( \frac{6\log(N_t(s,a))}{\delta}\right).
\end{align*}
 We define the following `bonus' function corresponding to the expectation of these bonus terms summed over a trajectory sampled from a policy $\pi$ in the model $\hat{\mathbb{P}}_t$,
\begin{equation*}
    \widehat{B}_t(\pi, \eta, \delta) =  \mathbb{E}_{s_1 \sim \rho, \tau \sim \hat{\mathbb{P}}_t^{\pi}(\cdot | s_1)} \left[ \sum_{h=1}^{H-1} \xi_{s_h, a_h}^{(t)}(\eta, \delta)  \right].
\end{equation*}



Similar to the previous theorem, we must relate $\| \w^* - \w_t^L\|_{\widetilde{\mathbf{V}}_t} $ and $\| \w^* - \w_t^L\|_{\overline{\mathbf{V}}_t} $. We do this via a series of Lemmas.

\begin{restatable}{lemma}{lemmaupperboundupperboundingwlnormunknownone}\label{lemma::upper_bound_upper_bounding_w_L_norm_unknown1}
Let $\bar{\mathcal{E}}_0$ be the event that for all $t \in \mathbb{N}$,
\begin{align*}
       \| \w_t^L - \w_* \|_{\widetilde{\rmV}_t}  \leq \sqrt{2}\| \w_t^L - \w_* \|_{\overline{\rmV}_t} +      \qquad  \qquad
       &
       \\ \sqrt{\sum_{\ell=1}^{t-1} 4\left(\widehat{B}_t\left(\pi, 2WB, \frac{\delta'}{8\ell^3|\mathcal{A}|^{\mathcal{S}} }\right)\right)^2 } + \frac{1}{t}.
\end{align*}
where $\delta'= \frac{\delta}{\left( \frac{1+4W}{\epsilon}\right)^d}$ and $\epsilon = \frac{1}{t^2\kappa \lambda + 4B^2t^3 }$. Then $\mathbb{P}\left( \bar{\mathcal{E}}_0 \right) \geq 1-\delta$.
\end{restatable}


The proof of Lemma~\ref{lemma::upper_bound_upper_bounding_w_L_norm_unknown1} is in Appendix~\ref{section::proof_upper_bound_upper_bounding_w_L_norm_unknown1}. We now proceed to define the set $\Pi_t$. To do so, it will be useful to introduce the following confidence radius multiplier
\begin{small}
\begin{align*}
  \pushleft{  \gamma_t = \sqrt{2} \left( 4\kappa \beta_t(\delta) + \alpha_{d, T}(\delta)  \right)   + \frac{1}{t} + }\\
    2\sqrt{\sum_{\ell=1}^{t-1} \widehat{B}^2_t\left(\pi_\ell^1, 2WB, \frac{\delta'}{8\ell^3|\mathcal{A}|^{\mathcal{S}} }\right) + \widehat{B}^2_t\left(\pi_\ell^2, 2WB, \frac{\delta'}{8\ell^3|\mathcal{A}|^{\mathcal{S}} }\right)  }.
\end{align*}
 \end{small}
Finally,
\begin{align*}
\Pi_t &= \Bigg\{ \pi^1 \Big | (\phi^{\widehat{\mathbb{P}}_t} (\pi^1) - \phi^{\widehat{\mathbb{P}}_t}(\pi) )^\top \w_t^L +  \\
&\qquad \gamma_t  \| \phi^{\widehat{\mathbb{P}}_t} (\pi^1) - \phi^{\widehat{\mathbb{P}}_t}(\pi) \|_{\widetilde{\mathbf{V}}^{-1}_t} +\widehat{B}_t\left(\pi^1, 2SB, \frac{\delta}{2|\mathcal{A}|^{\mathcal{S}}}\right)     \\
&\qquad  + \widehat{B}_t\left(\pi, 2SB, \frac{\delta}{2|\mathcal{A}|^{\mathcal{S}}}\right)\geq 0, \forall \pi \Bigg \}.
\end{align*}
\begin{algorithm}[H]
   \caption{\textbf{\algrrl: Regret minimization (Unknown Model)}}
   \label{alg:urrl}
\begin{algorithmic}[1]
  \STATE {\bfseries input:} Learning rate $\eta_t >0$, exploration length $t_0>0$
  \STATE Initialize empirical model $\widehat{\mathbb{P}}_1$.
  \FOR{$t = 1,2,\ldots T$}
    {\small
    \STATE Compute $\w_t^L$ and $\Pi_t$.\\
    }
    \STATE Compute 
    \vspace{-.3cm}
    \begin{align*}
        (\pi_t^1, \pi_t^2) &= \argmax_{  \pi^1, \pi^2 \in \Pi_t} \gamma_t \| \phi^{\widehat{\mathbb{P}}_t}(\pi^1) - \phi^{\widehat{\mathbb{P}}_t}(\pi^2) \|_{\widetilde{\mathbf{V}}_t^{-1}} + \\
        &\quad 2\widehat{B}_t(\pi^1, 2WB, \delta)  + 2 \widehat{B}_t(\pi^2, 2WB, \delta)
    \end{align*}
    \vspace{-.5cm}
    \STATE Sample $\tau_t^1 \sim \pi_t^1$ and $\tau_t^2 \sim \pi_t^2$.
    \STATE Play the duel $(\tau_t^1,\tau_t^2)$ and receive $o_t = \1(\tau_t^1 \text{ beats } \tau_t^2)$
    \STATE Update 
    \vspace{-.5cm}
    \begin{small}
    $$\widetilde{\mathbf{V}}_{t+1} = \widetilde{\mathbf{V}}_{t} + \left( \phi^{\widehat{\mathbb{P}}_\ell}(\pi_\ell^1) - \phi^{\widehat{\mathbb{P}}_\ell}(\pi_\ell^2)  \right)\left( \phi^{\widehat{\mathbb{P}}_\ell}(\pi_\ell^1) - \phi^{\widehat{\mathbb{P}}_\ell}(\pi_\ell^2) \right)^\top  $$
    \end{small}
  \STATE Update empirical model and build $\widehat{\mathbb{P}}_{t+1}$.
  \ENDFOR
  
\end{algorithmic}
\end{algorithm}
\vspace{-10pt}

Algorithm~\ref{alg:urrl} shares the structure of Algorithm~\ref{alg:rrl}. The main difference lies in the definition of $\Pi_t$ and in the optimization problem to find $(\pi_t^1, \pi_t^2)$. We can prove a result similar to Lemma~\ref{lem::contains-optimal} and show that $\pi^* \in \Pi_t$. 
\begin{restatable}{lemma}{lemcontainsoptimalunknown}\label{lem::contains-optimal_unknown}
Let $\bar{\mathcal{E}}_{-1}$ be the event that $\pi^* \in \Pi_t$ for all $t \in \mathbb{N}$. Then $\mathbb{P}\left( \bar{\mathcal{E}}_{-1} \right) \geq 1-5\delta$.
\end{restatable}
The proof of Lemma~\ref{lem::contains-optimal_unknown} can be found in Appendix~\ref{section:proof_lemma_pi_star_containment}. The next step in the proof is to exhibit a bound on the instantaneous regret,
\begin{lemma}\label{lemma::first_unknown_model_lemma}
Let $\bar{\mathcal{E}}_2$ be the event that for all $t \in \mathbb{N}$,
\begin{small}
\begin{align*}
 2r_t  &\leq  (\phi^{\widehat{\mathbb{P}}_t}(\pi^*) - \phi^{\widehat{\mathbb{P}}_t}(\pi_t^1))^\top \w^* + (\phi^{\widehat{\mathbb{P}}_t}(\pi^*) - \phi^{\widehat{\mathbb{P}}_t}(\pi_t^2))^\top \w^* + \notag \\
 &\quad \widehat{B}_t(\pi^*, 4WB, \delta) +  \widehat{B}_t(\pi_t^1, 2WB, \delta)  +  \widehat{B}_t(\pi_t^2, 2WB, \delta). 
\end{align*}
\end{small}
 Then $\mathbb{P}\left( \bar{\mathcal{E}}_2 \right) \geq 1-2\delta$.
\end{lemma}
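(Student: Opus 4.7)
The plan is to route each piece of the instantaneous regret through the empirical dynamics $\widehat{\mathbb{P}}_t$ and then bound the resulting simulation error by the bonus $\widehat{B}_t$. Starting from the definition in Eq.~\eqref{eq:reg1},
$$2r_t = (\phi(\pi^*) - \phi(\pi_t^1))^\top \w^* + (\phi(\pi^*) - \phi(\pi_t^2))^\top \w^*,$$
and for each $j \in \{1,2\}$ I would add and subtract the empirical-model counterparts to obtain
$$(\phi(\pi^*) - \phi(\pi_t^j))^\top \w^* = (\phi^{\widehat{\mathbb{P}}_t}(\pi^*) - \phi^{\widehat{\mathbb{P}}_t}(\pi_t^j))^\top \w^* + \Delta_t(\pi^*) - \Delta_t(\pi_t^j),$$
where $\Delta_t(\pi) := \mathbb{E}_{\tau \sim \pi, \mathbb{P}}[\phi(\tau)^\top \w^*] - \mathbb{E}_{\tau \sim \pi, \widehat{\mathbb{P}}_t}[\phi(\tau)^\top \w^*]$ is the mismatch between the true and estimated values of $\pi$ under the trajectory-reward $\phi(\tau)^\top \w^*$.

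The central estimate is $|\Delta_t(\pi)| \leq \widehat{B}_t(\pi, 2SB, \delta)$, which follows from the simulation-lemma style result inherited from \cite{chatterji21}: Assumptions~\ref{assumption::bounded_parameter} and \ref{assumption::bounded_features} imply $|\phi(\tau)^\top \w^*| \leq SB$, so the trajectory reward has range $2SB$, which is exactly the value of the $\eta$ argument that has to be fed into $\widehat{B}_t$. I would apply this inequality separately to (i) the adaptive pair $\pi_t^1, \pi_t^2$, which are $\mathcal{F}_{t-1}$-measurable so that $\widehat{B}_t(\pi_t^j, 2SB, \delta)$ directly dominates $|\Delta_t(\pi_t^j)|$, and to (ii) the unknown optimal policy $\pi^*$, where I would union-bound over a covering of the class of history-dependent policies that are optimal for some $\w \in \cB_S(d)$; the polynomial log-covering bound on this class noted in the introduction lets the covering penalty be absorbed into the $\log(1/\delta)$ term inside $U$ with no change in the functional form of the bound. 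Each of these two events fails with probability at most $\delta$, yielding $\mathbb{P}(\bar{\mathcal{E}}_2) \geq 1 - 2\delta$ by a union bound. Substituting the bounds on $|\Delta_t|$ back in and summing the two decompositions duplicates $\Delta_t(\pi^*)$; using the linearity of $\xi^{(t)}_{s,a}(\cdot, \delta)$ in its range argument, so that $2\widehat{B}_t(\pi^*, 2SB, \delta) = \widehat{B}_t(\pi^*, 4SB, \delta)$, collapses the two copies into the single term $\widehat{B}_t(\pi^*, 4SB, \delta)$ appearing in the statement, while the $\widehat{B}_t(\pi_t^j, 2SB, \delta)$ terms carry over directly.

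The principal obstacle is controlling $\Delta_t(\pi^*)$ when $\pi^*$ is not adapted to the learner's filtration, so the per-round simulation lemma does not apply verbatim to it. Discharging this via the covering-number bound on optimal linear-reward-induced policies is the step that concentrates the technical effort, and it is what lets us retain the empirical-model bonus $\widehat{B}_t(\pi^*, 4SB, \delta)$ without weakening the inequality to the true-dynamics bonus $B_t$ of \cite{chatterji21}, which would have required the additional $\epsilon$-covering parameter appearing in the definition of $B_t$.
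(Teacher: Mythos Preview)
Your decomposition and the use of the simulation lemma (Lemma~\ref{lemma::trajectory_score_concentration1}) to control each $\Delta_t(\cdot)$ term is exactly the route the paper takes, and your observation that the doubled $\pi^*$ term collapses via the linearity of $\xi^{(t)}_{s,a}(\eta,\delta)$ in $\eta$ is correct.

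However, you have the roles of the policies reversed. The policy $\pi^*$ is \emph{not} random: it is a deterministic functional of the fixed problem parameters $(\mathbb{P},\w^*)$, so it is trivially measurable with respect to every $\sigma$-field (including the trivial one). ``Unknown to the learner'' is not the same as ``not adapted.'' Consequently Lemma~\ref{lemma::trajectory_score_concentration1}, which is stated for a single fixed policy, applies to $\pi^*$ verbatim with failure probability $\delta$ and yields exactly the bonus $\widehat{B}_t(\pi^*,4SB,\delta)$ appearing in the statement. Your proposed covering over the class of linear-reward-optimal policies is unnecessary here, and it would not be harmless: union-bounding over an $\epsilon$-net of size $N$ replaces $\delta$ by $\delta/N$ inside $\widehat{B}_t$, so you would obtain $\widehat{B}_t(\pi^*,4SB,\delta/N)$ rather than $\widehat{B}_t(\pi^*,4SB,\delta)$, and the lemma as stated would no longer follow.

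Conversely, the policies $\pi_t^1,\pi_t^2$ are the data-dependent ones, and $\mathcal{F}_{t-1}$-measurability by itself does not license a direct application of a fixed-policy concentration bound that is required to hold uniformly over all $t$. The paper handles these two terms with an explicit union bound (this is where the second $\delta$ in $1-2\delta$ is spent), not by appealing to adaptedness. So the ``principal obstacle'' you identify is in fact the trivial case, and the case you call trivial is where the care is actually needed.
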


\begin{proof} Note that we can write:
\begin{small}
\begin{align*}
 2r_t & = (\phi(\pi^*) - \phi(\pi_t^1))^\top \w^* + (\phi(\pi^*) - \phi(\pi_t^2))^\top \w^*\\
   &=  (\phi^{\widehat{\mathbb{P}}_t}(\pi^*) - \phi^{\widehat{\mathbb{P}}_t}(\pi_t^1))^\top \w^* + (\phi^{\widehat{\mathbb{P}}_t}(\pi^*) - \phi^{\widehat{\mathbb{P}}_t}(\pi_t^2))^\top \w^* +\\
   &\quad  2( \phi(\pi^*) - \phi^{\mathbb{P}_t}(\pi^*))^\top \w^* +\\
   &~\quad (\phi^{\mathbb{P}_t}(\pi_t^1) -  \phi(\pi_t^1))^\top \w^* + (\phi^{\mathbb{P}_t}(\pi_t^2) -  \phi(\pi_t^2))^\top \w^*
\end{align*}
\end{small}
By Lemma~\ref{lemma::trajectory_score_concentration1} (Lemma B.1 in~\cite{chatterji21}), we conclude that with probability at least $1-\delta$, for all $t \in \mathbb{N}$, setting $\eta = 4 W B$, 
\begin{align*}
    2( \phi(\pi^*) - \phi^{\mathbb{P}_t}(\pi^*))^\top \w^* \leq  \widehat{B}_t(\pi^*, 4WB, \delta)  
\end{align*}
Similarly, as a consequence of Lemma~\ref{lemma::trajectory_score_concentration1} and a union bound, setting $\eta = 2WB$, with probability at least $1-2\delta$ 
\begin{align*}
    (\phi^{\mathbb{P}_t}(\pi_t^1) -  \phi(\pi_t^1))^\top \w^*  \leq \widehat{B}_t(\pi_t^1, 2WB, \delta)  \\
    (\phi^{\mathbb{P}_t}(\pi_t^2) -  \phi(\pi_t^2))^\top \w^*  \leq \widehat{B}_t(\pi_t^2, 2WB, \delta)
\end{align*}
The result follows. 
\end{proof}

Armed with the results of Lemma~\ref{lem::contains-optimal_unknown} we can show the following bound for the regret. 

\begin{restatable}{lemma}{lemmaregretboundunknownsupport}\label{lemma::regret_bound_unknown_support}
With probability at least $1-15\delta$ the regret is bounded by,
\begin{align*}
    R_T &\leq 2\gamma_T \sqrt{2Td\log\left( 1 + \frac{TB}{d}\right) } + \\
    &\quad \sum_{t \in [T]} 4\widehat{B}_t(\pi_t^1, 4WB, \delta) + 4\widehat{B}_t(\pi_t^2, 4WB, \delta)
\end{align*}
\end{restatable}
The proof of Lemma~\ref{lemma::regret_bound_unknown_support} can be found in Appendix~\ref{section::proof_regret_bound_unknown_support}. The derivation follows from a repeated use of the instantaneous regret upper bound derived from Lemma~\ref{lemma::first_unknown_model_lemma}.

The rest of the proof is dedicated to bound the $\widehat{B}_t( \cdot )$ terms. The general idea is to relate these bonus expectations under the empirical model with an expected sum of bonus terms under the true model and sampled according to policies $\pi_t^1$ and $\pi_t^2$. Once this is achieved we have reduced the problem to bound a sum of vanishing markovian errors under the sampling distribution defined by the policies that were selected during optimization. This can be done via a similar argument as many existing RL works. Finally, we also show the $\gamma_T$ term can be bounded by a term of the form $\widetilde{\mathcal{O}}(\kappa \beta_t(\delta) + \alpha_{d,T}(\delta) + \mathrm{poly}(H, |\mathcal{S}|, |\mathcal{A}|) ) $, hides logarithmic factors in $\delta, |\mathcal{S}|$ and $|\mathcal{A}|$.  A detailed discussion of these arguments can be found in Appendix~\ref{section::proofs_unknown_model_all}. Our final main result (simplified) is thus,

\begin{restatable}[]{theorem}{thmunknownsimple}
\label{theorem:main_unknown_model_simple}
The regret of $\mathbf{LPbRL}$ satisfies, 
\begin{small}
\begin{align*}
    R_T &\leq \widetilde{\mathcal{O}}(\kappa d\sqrt{T} + H^{3/2} \sqrt{ |\mathcal{S}| |\mathcal{A}| dTH } + H|\mathcal{S}| \sqrt{ |\mathcal{A}|dTH} ).
\end{align*}\end{small}
 For all $T \in \mathbb{N}$ simultaneously with probability at least $1-15\delta$. 
Where $\widetilde{\mathcal{O}}$ hides logarithmic factors in $\delta, |\mathcal{S}|$ and $|\mathcal{A}|$.
\end{restatable}
The complete version of Theorem~\ref{theorem:main_unknown_model_simple} can be found in Appendix~\ref{section::proofs_unknown_model_all}. Similar to Theorem~\ref{thm::regret-known}, the leading term in the regret scales as $\widetilde {\mathcal O}(d\sqrt{T})$ due to estimation based on the preferences. In addition to this, we now have dependence on $|\mathcal S|$ and $|\mathcal A|$ unlike before.  These arise due to the tabular nature of the problem since the transition dynamics are unknown in this case.


\section{Discussions and Future Scopes}
\label{sec:concl}

In this work we addressed the problem of reinforcement learning from relative preference feedback where the agent does not get to see the absolute reward of actions taken at each state but instead observes the relative preferences between trajectories.
We modeled the preference feedback in terms of the underlying non-Markovian linear reward model and proposed  algorithms for both known as well as unknown MDP transition models. Precisely the regret guarantees of our proposed algorithms are analyzed to be respectively 
 $\widetilde{\mathcal O}\left(  d  \log (T / \delta) \sqrt{T}  \right)$ and
$\widetilde{\mathcal O}((\sqrt{d} + H^2 + |\mathcal{S}|)\sqrt{dT} +\sqrt{|\mathcal{S}||\mathcal{A}|TH} )$
for the case of known and unknown transition models.

%
As discussed in the introduction, preference-based reinforcement learning has applications in several fields including training robots, stock market, recommender systems, two player games, chatbot interactions, etc. Thus there are plenty of scopes to extend the above setup to incorporate the corresponding system requirements, e.g. generalizing dueling trajectory preferences to subsets, considering alternative preference feedback without assuming an underlying reward model, extending to infinite horizon settings with more complex state-actions spaces, etc. Analyzing the fundamental performance limits of the PbRL regret minimization problem and designing algorithms with tighter performance guarantees would also be another interesting direction to investigate.

\section*{Acknowledgment}
\vspace{-10pt}
AS gratefully thanks Aditya Gopalan and Raghuram Bharadwaj Diddigi (IISc Bangalore) for the initial discussions on preference based reinforcement learning literature.  

\newpage

\bibliographystyle{plainnat}
\bibliography{rrl}

\newpage
\appendix
\onecolumn
{
\allowdisplaybreaks

\tableofcontents

\section*{\centering \Large{Supplementary for \papertitle}}

\section{Appendix for Section \ref{sec:prob}}
\label{app:prob}

\textbf{Claim 2.} $\frac{R_T^{\text{scr}}}{2(e + 1)} \le R_T^{\text{pref}} \le \frac{R_T^{\text{scr}}}{2}$.

\subsection{Proof of Claim $2$}

\begin{proof}
Recall by \eqref{eq:reg1}, \eqref{eq:reg2} and Claim $1$.

\begin{align*}
\nonumber R_T^{\text{scr}} = \sum_{t = 1}^T \frac{2s(\pi^*) - \big(s(\pi_t^1)+s(\pi_t^2)\big)}{2},
\end{align*}

\begin{align*}
R_T^{\text{pref}} := \sum_{t = 1}^T \frac{P(\pi^* \succ \pi_t^1) + P(\pi^* \succ \pi_t^2) - 1}{2}.
\end{align*}

Now assume $S, B < 1$. Then for any two policies $\pi_1$ and $\pi_2 \in \Pi$, such that $s(\pi_1) \ge s(\pi_2)$, we have:

\begin{align*}
P & (\pi_1,\pi_2) - 1/2 = \frac{e^{s(\pi_1)} - e^{s(\pi_2)}}{2(e^{s(\pi_1)} + e^{s(\pi_2)})}\\
& = \frac{e^{s(\pi_1)-s(\pi_2)} - 1}{2(e^{s(\pi_1)-s(\pi_2)} + 1)}\\ 
& > \frac{s(\pi_1)-s(\pi_2)}{2(e+1)}
\end{align*}

On the other hand denoting $x = {s(\pi_1)-s(\pi_2)} \in (0,1)$  we get: 
\begin{align*}
& P(\pi_1,\pi_2) - 1/2  = \frac{e^{x} - 1}{2(e^{x} + 1)} < \frac{(x + x^2/2! + x^3/3! + \ldots) }{4} \\
& < \frac{x\big(1 + x/2 + x^2/2^2 + \ldots \big) }{4} < x/2
\end{align*}

The claim now follows combining the above two inequalities and noting that by definition $\pi^*:= \arg\max_{\pi \in \Pi}s(\pi)$. 
\end{proof}
}

\section{Appendix for  Section~\ref{sec:known-model}}
\label{sec::known-theorem-proof}

By first-order optimality conditions,  $\widehat{\w}_t^{\mathrm{MLE}}$ is the point in $\mathbb{R}^d$ satisfying: 
\begin{align*}
    \nabla_{\w} \mathcal{L}_t^\lambda(\widehat{\w}_t^{\mathrm{MLE}}) = \sum_{\ell=1}^{t-1} o_\ell \left( \phi(\tau_\ell^1) - \phi(\tau_\ell^2) \right)\qquad \qquad \qquad \\
    - \left(  \sum_{\ell=1}^{t-1} \sigma(\langle \phi(\tau_\ell^1) - \phi(\tau_\ell^2)  ,    \w\rangle \left( \phi(\tau_\ell^1) - \phi(\tau_\ell^2)\right) + \lambda \w    \right).
\end{align*}

\subsection{Proof of Corollary~\ref{corollary::V_bar_bound}}
\label{sec::known-cor-proof}

The primary mechanism behind Corollary~\ref{corollary::V_bar_bound} is the following lemma for matrix concentration.

\begin{lemma}\label{lem::matrix-concentration}
Let $\delta \leq e^{-1}$.  Then, with probability $1 - \delta \log_2 T$,  for all $t \in [T]$, it holds that
\eq{
\| \w^* - \w^L_t \|^2_{  \overline \V_t } \leq 2 \| \w^* - \w^L_t \|^2_{ \V_t } + 84 B^2 d \log(T(1 + 2 T)/\delta) \| \w^* - \w^L_t \|^2_2
}
\end{lemma}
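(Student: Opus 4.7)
The plan is to prove the stronger matrix PSD inequality
$$\overline \V_t \preceq 2\V_t + 84 B^2 d \log(T(1+2T)/\delta)\, \I_d \qquad \text{for all } t \in [T],$$
which, sandwiched against $v = \w^* - \w^L_t$, yields the claim. Write $\Delta_\ell := \phi(\tau_\ell^1) - \phi(\tau_\ell^2)$ and $\bar\Delta_\ell := \phi(\pi_\ell^1) - \phi(\pi_\ell^2)$, and let $\mathcal{F}_{\ell-1}$ be the $\sigma$-algebra that contains the chosen policy pair $(\pi_\ell^1,\pi_\ell^2)$ but not the freshly rolled-out trajectories. Then $\E[\Delta_\ell \mid \mathcal{F}_{\ell-1}] = \bar\Delta_\ell$, and Jensen applied to the PSD outer-product map gives $\bar\Delta_\ell \bar\Delta_\ell^\top \preceq \E[\Delta_\ell \Delta_\ell^\top \mid \mathcal{F}_{\ell-1}]$. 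So it suffices to compare $\sum_\ell \E[\Delta_\ell\Delta_\ell^\top \mid \mathcal{F}_{\ell-1}]$ to the empirical sum $\sum_\ell \Delta_\ell\Delta_\ell^\top$.

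For a fixed $v$ with $\|v\| \leq 1$, set $X_\ell(v) := (v^\top \Delta_\ell)^2 \in [0, 4B^2]$ and $\mu_\ell(v) := \E[X_\ell(v) \mid \mathcal{F}_{\ell-1}]$, and note $\E[X_\ell(v)^2 \mid \mathcal F_{\ell-1}] \leq 4B^2 \mu_\ell(v)$. An anytime (variance-peeled) Freedman inequality applied to the scalar martingale $\sum_\ell (\mu_\ell(v) - X_\ell(v))$, followed by the standard $\sqrt{2RS\alpha} \leq S/2 + R\alpha$ rearrangement, yields
$$\sum_{\ell=1}^{t-1}\mu_\ell(v) \;\leq\; 2\sum_{\ell=1}^{t-1} X_\ell(v) + c\, B^2 \log(1/\delta')$$
simultaneously for all $t \in [T]$ with probability at least $1 - \delta' \log_2 T$. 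The $\log_2 T$ factor is exactly the cost of peeling over the dyadic magnitude of the running conditional variance, and is what eventually produces the $\delta \log_2 T$ failure probability.

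To lift the per-direction bound to a matrix PSD statement, I would take an $\epsilon$-net $\mathcal{N}$ of the unit ball $\mathcal B_1(d)$ with $|\mathcal{N}| \leq (3/\epsilon)^d$, choose $\epsilon = 1/T$, and apply the scalar bound with failure probability $\delta' = \delta/(3T)^d$ at each net point. A union bound over $\mathcal N$ absorbs the net into the probability, giving $\log(1/\delta') = O(d\log T + \log(1/\delta))$ in the Freedman term. Discretization is controlled by the elementary identity $|v^\top M v - v'^\top M v'| \leq \|v - v'\|(\|v\|+\|v'\|)\|M\|_{\mathrm{op}}$ applied to both $\sum \Delta_\ell \Delta_\ell^\top$ and $\sum \bar\Delta_\ell \bar\Delta_\ell^\top$, whose operator norms are at most $4B^2 t$, so the approximation error is $O(\epsilon T B^2) = O(B^2)$ and absorbable into the constant. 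Combining Jensen with the uniform-in-$v$ Freedman bound produces
$$\overline \V_t \preceq 2\V_t + C\, B^2 \big(d \log T + \log(1/\delta)\big)\, \I_d$$
uniformly in $t \in [T]$ with probability at least $1 - \delta \log_2 T$. Tracking constants (the $20/3$ coefficient from Freedman, the $(3T)^d$ net inflation, and a factor $\approx 2$ from the AM-GM step) produces the stated $84$.

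The main obstacle is the \emph{self-bounding} nature of the Freedman step: the variance proxy $\sum_\ell \E[X_\ell(v)^2 \mid \mathcal F_{\ell-1}] \leq 4B^2 \sum_\ell \mu_\ell(v)$ is proportional to the very quantity we wish to control, which forces us to solve an implicit quadratic and surrenders a multiplicative factor of $2$ (hence $2\V_t$ rather than $\V_t$ on the right-hand side). A secondary subtlety is that the anytime bound cannot be obtained by naive dyadic peeling on $t$ alone, because the PSD inequality $\overline\V_t \preceq 2\V_t + cI$ does not propagate well from $t = 2^{k+1}$ down to intermediate $t$; peeling should be done on the variance magnitude inside Freedman, not on the calendar time, which is precisely the source of the $\log_2 T$ in the failure probability.
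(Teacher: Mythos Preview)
Your proposal is correct and follows essentially the same route as the paper: Jensen to pass from $\overline{\mathbf V}_t$ to the conditional-expectation matrix, a self-bounding Freedman step in each direction $v$ (with the AM--GM rearrangement producing the factor of $2$ in front of $\mathbf V_t$), and an $\epsilon$-net on the sphere with $\epsilon = 1/T$ to lift the scalar bound to a PSD inequality. The only cosmetic difference is how the time-uniformity is obtained: you get it from a variance-peeled anytime Freedman bound, whereas the paper applies a fixed-time Freedman inequality (Lemma~2 of \citet{bartlett2008high}, which already carries the $\log_2 T$ in the failure probability) and then union-bounds over $t\in[T]$, absorbing the extra $T$ into the $\log(T(1+2T)/\delta)$ term.
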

\begin{proof}
Fix $\v \in \R^d$ such that $\| \v\|_2 = 1$. For $\ell \in [T]$, let $X_\ell = \v^\top \left(\phi(\tau^1_\ell) - \phi(\tau^2_\ell)\right)\left(\phi(\tau^1_\ell) - \phi(\tau^2_\ell)\right)^\top \v$. Furthermore define $X_0 = \kappa \lambda$. Observe that $X_\ell - \E_{\ell - 1} X_\ell$ for $\ell = 0, \ldots, T$ is an $\{\mathcal F_\ell\}$-adapted martingale difference sequence where $\E_{\ell} [ \cdot ] $ denotes the conditional expectation $\E [ \cdot \ | \ \mathcal F_{\ell}]$.

Note that the conditional variance of the individual terms may be bounded above by
\eq{
\text{var}\left( X_\ell \right) &  = \E_{\ell - 1} \left[ X_\ell^2 -\E_{\ell - 1} \left[ X_\ell \right]^2 \right] \\
& \leq \E_{\ell - 1} \left[ X_\ell^2 \right] \\
& \leq 4B^2 \E_{\ell - 1}\left[ X_\ell \right]
}
where we have used the fact that $X_\ell$ is non-negative and $\| \phi(\tau) \|_2 \leq B$.  Let $\widehat \V_t = \kappa \lambda \mathbb I +  \sum_{s \in [t]}  \E_{\ell - 1}\left( \phi(\tau^1_\ell) - \phi(\tau^2_\ell) \right) \left( \phi(\tau^1_\ell) - \phi(\tau^2_\ell) \right)^\top$.

By \cite[Lemma 2]{bartlett2008high}, we have, with probability at least $1 - \delta \log_2 T$, 
\eq{
 \v^\top \widehat \V_T \v =  \sum_{\ell = 0}^T  \E_{\ell - 1} X_\ell & \leq \sum_{\ell = 0}^T  X_{\ell} +  \sqrt{ 16\log(1/\delta) \sum_{\ell = 0}^T \text{var}_{\ell - 1}(X_{\ell}) } + 2B^2 \log(1/\delta)   \\
& \leq \sum_{\ell = 0}^T  X_{\ell} +  \sqrt{ 64 B^2 \log(1/\delta) \sum_{\ell = 0}^T \E_{\ell -1 }  X_{\ell} } + 2B^2 \log(1/\delta)  \\
& \leq  \sum_{\ell = 0}^T  X_{\ell}  + \frac{1}{2}\sum_{\ell = 0}^T \E_{\ell - 1}  X_{\ell} +  34 B^2 \log(1/\delta) \\
& =  \v^\top  \V_T \v + \frac{1}{2} \v^\top \widehat \V_T \v  + 34 B^2 \log(1/\delta)
}
where the third line applied the AM-GM inequality. Rearranging shows that
\eq{
\frac{1}{2} \v^\top \widehat \V_T \v & \leq \v^\top  \V_T \v  + 34 B^2 \log(1/\delta)
}
This holds for a fixed $\v$. We now show that it approximately holds for all $\v$ such that $\| \v\|_2 = 1$ via a covering argument. 

Let $\CC$ be a minimal $\epsilon$-cover of $\SS^{d - 1} = \{ \v \in \R^d \ : \ \| \v\|_2 = 1\}$. A standard result states that $|\CC_\epsilon | \leq (1 + 2/\epsilon)^d$. Then, by the union bound, with probability $1 - \delta \log_2 T$, for all $ \v\in \CC_\epsilon$,
\eq{
\frac{1}{2} \v^\top \widehat \V_T \v & \leq \v^\top  \V_T \v  + 34 B^2 d \log((1 + 2/\epsilon)/\delta)
}
Let $A = \frac{1}{2}\V_T - \widehat \V_T$. Note that $\| A\| \leq 4B^2 T$ by definition. Let $\v \in \SS^{d - 1}$ be arbitrary and let $u_v \in \CC_\epsilon$ be the closest vector in the cover so that $\| \v - \u\|_2 \leq \epsilon$. Then, 
\eq{
\v^\top A \v & = \v^\top A \v + \u^\top A \u - \u^\top A \u  \\
& \leq \u^\top A \u + 8\epsilon B^2 T \\
& \leq 34 B^2 d \log((1 + 2 T)/\delta) + 8 B^2 \\
& \leq 42 B^2 d \log((1 + 2 T)/\delta)
}
under the good event and choosing $\epsilon = 1/T$. Since this holds for all $\v \in \SS^{d - 1}$, we conclude that 
\eq{
\widehat \V_T  \preceq  2\V_T + 84 B^2 d \log((1 + 2 T)/\delta) \mathbb{I}_d
}
with probability at least $1 -\delta \log_2T$. Finally, by Jensen's inequality we have $\overline \V_T  \preceq \widehat \V_T$. Then, we apply the union bound over $t \in [T]$, which gives the result.
\end{proof}

The proof of the corollary now follows immediately as a consequence.

\corollaryconfidenceinterval*

\begin{proof}[Proof of Corollary~\ref{corollary::V_bar_bound}]
Assuming that $\mathcal E_\delta$ holds, we have that $\| \w^* - \w_t^L \|_{\V_t} \leq 2\kappa \beta(\delta)$. Furthermore, Lemma~\ref{lem::matrix-concentration} gives
\begin{align*}
    \| \w^* - \w^L_t \|_{  \overline \V_t } & \leq  \sqrt 2 \| \w^* - \w^L_t \|_{ \V_t } + 10 B \sqrt {d \log(T(1 + 2 T)/\delta) }  \| \w^* - \w^L_t \|_2 \\
    & \leq 4 \kappa \beta_t(\delta) + 20 B S \sqrt {d \log(T(1 + 2 T)/\delta) } 
\end{align*}
Also, $\mathbb P (\mathcal E_\delta \cap \mathcal E_2) \geq 1 - \delta - \delta\log_2T$ follows from above combined with the claim of Lemma \ref{lemma::confidence_interval_anytime}.
\end{proof}

\subsection{Proof of Lemma~\ref{lem::contains-optimal}}
\lemcontainsoptimal*
\begin{proof}
Condition on $\mathcal{E}_\delta \cap \mathcal E_{prec}$. By definition of $\pi^*$, we have $\left(\phi(\pi^*) - \phi(\pi)\right)^\top \w^* \geq 0$ for any arbitrary $\pi$. This implies
\begin{align*}
    0 & \leq \left(\phi(\pi^*) - \phi(\pi) \right)^\top \w_t^L + \| \phi(\pi^*) - \phi(\pi) \|_{\overline \V_t^{-1}} \cdot \| \w^* - \w_t^L \|_{\overline \V_t} \\
    & \leq\left(\phi(\pi^*) - \phi(\pi) \right)^\top \w_t^L + \left( 4\kappa\beta_t(\delta) + \alpha_{d, T}(\delta)\right) \cdot \| \phi(\pi^*) - \phi(\pi) \|_{\overline \V_t^{-1}}
\end{align*}
where the second line follows from Corollary~\ref{corollary::V_bar_bound}.
\end{proof}

\subsection{Proof of Theorem~\ref{thm::regret-known}}\label{sec::proof-known-theorem}

We require a standard determinant bound to complete the proof.

\begin{lemma}\label{lem::determinant}
Let $\lambda \geq B$. Consider the sequence $\v_1, \ldots, \v_T \in \R^d$ such that $\|\v_i \| \leq B$ and define $V_{t} = \lambda I + \sum_{s \in [t - 1]} \v_s\v_s^\top$. Then,
\begin{align*}
    \sum_{t\in [T]} \| \v_t\|_{V_t^{-1}}^2 \leq 2 d \log \left( 1 + \frac{T B }{d } \right)
\end{align*}
\end{lemma}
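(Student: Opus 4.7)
The plan is to apply the standard elliptical potential / determinant–trace argument. First I would invoke the matrix determinant lemma to peel off each rank-one update: since $V_{t+1} = V_t + \v_t \v_t^\top$, we obtain $\det(V_{t+1}) = \det(V_t)\bigl(1 + \|\v_t\|_{V_t^{-1}}^2\bigr)$, and telescoping gives
$$\log\det(V_{T+1}) - \log\det(V_1) \;=\; \sum_{t \in [T]} \log\bigl(1 + \|\v_t\|_{V_t^{-1}}^2\bigr).$$

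Next I would verify that each individual term satisfies $\|\v_t\|_{V_t^{-1}}^2 \le 1$, which is what is needed in order to apply the elementary inequality $x \le 2\log(1+x)$ valid for $x \in [0,1]$. This follows because $V_t \succeq \lambda \mathbb I$, so $\|\v_t\|_{V_t^{-1}}^2 \le \|\v_t\|^2/\lambda \le B^2/\lambda$, which is at most $1$ under the standing assumption $\lambda \ge B$ (with the norm bound interpreted in the same normalization as in the main text). Combining with the telescoping identity above yields
$$\sum_{t \in [T]} \|\v_t\|_{V_t^{-1}}^2 \;\le\; 2\bigl(\log\det V_{T+1} - \log\det V_1\bigr).$$

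Finally I would bound the log-determinant ratio using AM–GM on the eigenvalues of $V_{T+1}$: since all eigenvalues are positive, $\det V_{T+1} \le \bigl(\mathrm{tr}(V_{T+1})/d\bigr)^d$, and $\mathrm{tr}(V_{T+1}) = d\lambda + \sum_{t} \|\v_t\|^2 \le d\lambda + T B^2$. Together with $\det V_1 = \lambda^d$, this gives
$$\log\frac{\det V_{T+1}}{\det V_1} \;\le\; d\log\!\left(1 + \frac{T B^2}{d\lambda}\right) \;\le\; d\log\!\left(1 + \frac{T B}{d}\right),$$
where in the last step I absorb one factor of $B$ using $\lambda \ge B$. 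Chaining the two displayed inequalities delivers the claimed bound.

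No step here is delicate — each is a textbook ingredient of the elliptical potential lemma. The only thing that requires some care is ensuring the assumption $\lambda \ge B$ is invoked consistently in both the ``$x \le 2\log(1+x)$'' reduction and in the simplification of the final logarithm; if a sharper constant is desired one can keep $B^2/\lambda$ visible inside the logarithm throughout. No concentration or probabilistic reasoning enters the argument, so the lemma holds deterministically for any sequence $\v_1,\dots,\v_T$ satisfying the stated norm bound.
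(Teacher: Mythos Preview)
Your proposal is correct and takes essentially the same approach as the paper: the paper's proof invokes Lemma~19.4 of \cite{lattimore2020bandit} (the elliptical potential / log-determinant bound) as a black box, and you have simply unpacked that lemma via the matrix determinant identity, the inequality $x \le 2\log(1+x)$ on $[0,1]$, and the AM--GM bound on $\det V_{T+1}$. The only minor wrinkle---that $\lambda \ge B$ gives $\|\v_t\|_{V_t^{-1}}^2 \le B^2/\lambda \le B$ rather than $\le 1$ outright---is present verbatim in the paper's own proof as well, so your parenthetical caveat about normalization is appropriate.
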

\begin{proof}
Since $\lambda \geq B$, we have that $\|\v_{t}\|_{V_t^{-1}} \leq 1$. Therefore, from Lemma 19.4 of \cite{lattimore2020bandit}
\begin{align*}
    \sum_{t \in [T]} \|\v_{t}\|_{V_t^{-1}}^2  & \leq \sum_{t \in [T]} \log \left( 1 + \|\v_{t}\|_{V_t^{-1}}^2 \right) \\
    & \leq 2 d \log \left( \frac{d \lambda + T B^2 }{d \lambda} \right) \\
    & = 2 d \log \left( 1 + \frac{T B }{d } \right)
\end{align*}
\end{proof}

\regk*

\begin{proof}[Proof of Theorem~\ref{thm::regret-known}]
Armed with the supporting results, we now focus on completing the proof of Theorem~\ref{thm::regret-known}. The result may be shown by bounding the instantaneous regret. Condition on the event $\mathcal E_{\delta}$. Then,
\begin{align*}
    2r_t & := (\phi(\pi^*) - \phi(\pi_t^1))^\top \w^* + (\phi(\pi^*) - \phi(\pi_t^2))^\top \w^*\\
& = (\phi(\pi^*) - \phi(\pi_t^1))^\top \w^L_t + (\phi(\pi^*) - \phi(\pi_t^1))^\top (\w^* - \w_t^L) + (\phi(\pi^*) - \phi(\pi_t^2))^\top \w^L_t  \\
& \quad + (\phi(\pi^*) - \phi(\pi_t^2))^\top (\w^* - \w^L_t)    \\
& \leq (\phi(\pi^*) - \phi(\pi_t^1))^\top \w^L_t + (\phi(\pi^*) - \phi(\pi_t^2))^\top \w^L_t \\
& \quad + \|\w^* - \w^L_t \|_{\overline \V_t} \cdot \| \phi(\pi^*) - \phi(\pi_t^1) \|_{\overline \V_t^{-1}} + \|\w^* - \w^L_t \|_{\overline \V_t} \cdot \| \phi(\pi^*) - \phi(\pi_t^2) \|_{\overline \V_t^{-1}}
\end{align*}
The last two terms in the above sum can be bounded using Corollary~\ref{corollary::V_bar_bound} as follows:
\begin{align*}
    & \|\w^* - \w^L_t \|_{\overline \V_t} \cdot \| \phi(\pi^*) - \phi(\pi_t^1) \|_{\overline \V_t^{-1}} + \|\w^* - \w^L_t \|_{\overline \V_t} \cdot \| \phi(\pi^*) - \phi(\pi_t^2) \|_{\overline \V_t^{-1}} \\& \leq \left(2\kappa \beta_{t}(\delta) + \alpha_{T, d}(\delta) \right) \cdot \left(\| \phi(\pi^*) - \phi(\pi_t^1) \|_{\overline \V_t^{-1}} + \| \phi(\pi^*) - \phi(\pi_t^2) \|_{\overline \V_t^{-1}}\right)
\end{align*}
The first two terms leverage the optimistic bonus, using the fact that $\pi^1_t, \pi^2_t \in \mathcal S_t$: 
\begin{align*}
    (\phi(\pi^*) - \phi(\pi_t^1))^\top \w^L_t + (\phi(\pi^*) - \phi(\pi_t^2))^\top \w^L_t & \leq  \left(2\kappa \beta_{t}(\delta) + \alpha_{T, d}(\delta) \right) \cdot \left(\| \phi(\pi^*) - \phi(\pi_t^1) \|_{\overline \V_t^{-1}} + \| \phi(\pi^*) - \phi(\pi_t^2) \|_{\overline \V_t^{-1}}\right)
\end{align*}
In summary, we have that the instantaneous regret is upper bounded as
\begin{align*}
    2r_t & \leq 2 \left(2\kappa\beta_{t}(\delta) + \alpha_{T, d}(\delta) \right) \cdot \left(\| \phi(\pi^*) - \phi(\pi_t^1) \|_{\overline \V_t^{-1}} + \| \phi(\pi^*) - \phi(\pi_t^2) \|_{\overline \V_t^{-1}}\right) \\
    & \leq 4 \left(2\kappa \beta_{t}(\delta) + \alpha_{T, d}(\delta) \right) \cdot \|  \phi(\pi_t^1) - \phi(\pi_t^2)  \|_{\overline \V_t^{-1}} 
\end{align*}
where the last inequality follows from the fact that $\pi^* \in \mathcal S_t$ by Lemma~\ref{lem::contains-optimal} and since $\pi^1_t$ and $\pi^2_t$ were chosen the maximizer of the weighted difference $\|\phi(\pi_t^1) - \phi(\pi_t^2) \|_{\overline \V_{t}} $. The regret is therefore
\begin{align*}
    R_T & = \sum_{t \in [T]} r_t  \\
    &  \leq 2\left(2\kappa \beta_{T}(\delta) + \alpha_{T, d}(\delta) \right) \cdot \sum_{t \in [T]}  \|  \phi(\pi_t^1) - \phi(\pi_t^2)  \|_{\overline \V_t^{-1}} \\
    & \leq 2\left(2\kappa \beta_{T}(\delta) + \alpha_{T, d}(\delta) \right) \cdot \sqrt{ T \sum_{t \in [T]} \|  \phi(\pi_t^1) - \phi(\pi_t^2)  \|_{\overline \V_t^{-1}}^2  } \\
    & \leq 2\left(2\kappa \beta_{T}(\delta) + \alpha_{T, d}(\delta) \right) \cdot \sqrt{2  T  d \log \left( 1 + \frac{T B }{d } \right)  } 
\end{align*}
where the second inequality follows from Cauchy-Schwarz and the last inequality applies Lemma~\ref{lem::determinant}.
\end{proof}

\section{Appendix for Section~\ref{sec:ub}}\label{section::proofs_unknown_model_all}

In this section we will use the notation $N_t(s,a)$ to denote the number of times action $a$ was executed at state $s$ up to time $t-1$. Recall the bonus terms,

Given any $\eta > 0$ define,
\begin{align*}
\pushleft{\xi^{(t)}_{s,a}(\eta, \delta) = \min\left(2\eta, 4\eta \sqrt{\frac{U}{N_t(s,a)} }\right)  }  \\
\text{s.t. } U = H\log(|\mathcal{S}||\mathcal{A}|) + \log\left( \frac{6\log(N_t(s,a)}{\delta}\right).
\end{align*}
and the empirical average of $\xi^{(t)}_{s,a}(\eta, \delta) $ bonuses,
\begin{equation*}
    \widehat{B}_t(\pi, \eta, \delta) =  \mathbb{E}_{s_1 \sim \rho, \tau \sim \hat{\mathbb{P}}_t^{\pi}(\cdot | s_1)} \left[ \sum_{h=1}^{H-1} \xi_{s_h, a_h}^{(t)}(\eta, \delta)  \right].
\end{equation*}
Additionally we also define the error terms 
\begin{align*}
\xi^{(t)}_{s,a}(\epsilon, \eta, \delta) &=  \min\left(2\eta,4\eta \sqrt{ \frac{U}{N_t(s,a)}}\right)\\
\text{s.t. } U &= H\log(|\cS||\cA|H)+|\cS|\log\left(\ceil{\frac{4\eta H}{\epsilon}}\right)+ \log\left(\frac{6\log(N_t(s,a))}{\delta}\right).
\end{align*}
In contrast with the definition of bonus $\xi^{(t)}_{s,a}(\eta, \delta)$ this quantity depends on an extra parameter $\epsilon$. These erorr terms induce the the following `bonus' function,
\begin{equation*}
    B_t(\pi, \eta, \delta, \epsilon) = \mathbb{E}_{s_1 \sim \rho, \tau \sim \mathbb{P}^{\pi}(\cdot | s_1)} \left[ \sum_{h=1}^{H-1} \xi_{s_h, a_h}^{(t)}(\epsilon, \eta, \delta)  \right].
\end{equation*}

Here the expectation is under the true MDP dynamics.


Once we have established the validity of Lemma~\ref{lemma::regret_bound_unknown_support}, and therefore that with probability at least $1-15\delta$,
\begin{align*}
    R_T &\leq 2\gamma_T \sqrt{2Td\log\left( 1 + \frac{TB}{d}\right) } +  \sum_{t \in [T]} 4\widehat{B}_t(\pi_t^1, 4WB, \delta) + 4\widehat{B}_t(\pi_t^2, 4WB, \delta)
\end{align*}

it remains to show the $\widehat{B}_t( )$ terms are small. We'll do so by showing that for any $\eta > 0$ and $\delta \in (0,1)$ and for all policies $\pi$ simultaneously we can bound the empirical expected bonuses $\widehat{B}_t(\pi, \eta, \delta )$ in terms of the population quantities $B_t(\pi, \eta H, \delta)$,
\begin{lemma}\label{lemma::b_hat_to_bt}
Let $\eta, \epsilon > 0$. For all $\pi$ simultaneously and for all $t \in \mathbb{N}$, with probability $1-\delta$,
\begin{equation*}
    \widehat{B}_t(\pi, \eta, \delta) \leq 2B_t(\pi, 2H\eta , \delta, \epsilon) +\epsilon
\end{equation*}
\end{lemma}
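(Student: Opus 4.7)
The plan is to view $\widehat{B}_t(\pi,\eta,\delta)$ as the value function of a specific bounded ``reward'' function (namely the map $(s,a) \mapsto \xi^{(t)}_{s,a}(\eta,\delta)$, which is in $[0,2\eta]$ by construction) computed under the empirical dynamics $\widehat{\mathbb{P}}_t$, and to compare it to its analogue $B_t(\pi,2H\eta,\delta,\epsilon)$ computed under the true dynamics $\mathbb{P}$. The natural tool is a simulation-style decomposition: write $\widehat{B}_t(\pi,\eta,\delta) = \mathbb{E}_{\mathbb{P}^\pi}\!\left[\sum_{h=1}^{H-1}\xi^{(t)}_{s_h,a_h}(\eta,\delta)\right] + \big(\widehat{B}_t(\pi,\eta,\delta)-\mathbb{E}_{\mathbb{P}^\pi}[\cdots]\big)$, so that the first term is controlled by $B_t(\pi,\eta,\delta,\epsilon)$ (it is the same expectation, just with a smaller $U$ constant, hence smaller than $B_t(\pi,2H\eta,\delta,\epsilon)$ whose $U$ is larger and whose scale multiplier is $2H\eta \geq \eta$), and the second term is a model-mismatch term to be bounded using the simulation lemma.

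For the simulation-lemma step, I would introduce the step-$h$ value functions $\widehat V^\pi_h(s) = \mathbb{E}_{\widehat{\mathbb{P}}_t^\pi}[\sum_{h'=h}^{H-1}\xi^{(t)}_{s_{h'},a_{h'}}(\eta,\delta)\mid s_h=s]$ and $V^\pi_h(s)$ defined analogously under $\mathbb{P}$; both are bounded by $2\eta H$. Unrolling via the performance-difference identity, one obtains
\begin{equation*}
\widehat{B}_t(\pi,\eta,\delta) - \mathbb{E}_{\mathbb{P}^\pi}\!\left[\textstyle\sum_h \xi^{(t)}_{s_h,a_h}(\eta,\delta)\right] \leq \mathbb{E}_{\mathbb{P}^\pi}\!\left[\sum_{h=1}^{H-1}\bigl|(\widehat{\mathbb{P}}_t-\mathbb{P})(\cdot\mid s_h,a_h)\bigr|\cdot \|\widehat V^\pi_{h+1}\|_\infty\right].
\end{equation*}
Since $\|\widehat V^\pi_{h+1}\|_\infty \leq 2\eta H$, and by Bernstein/empirical Bernstein concentration on empirical transition kernels the $\ell_1$-deviation of $\widehat{\mathbb{P}}_t(\cdot\mid s,a)$ from $\mathbb{P}(\cdot\mid s,a)$ scales like $\sqrt{|\mathcal S|\log(1/\delta')/N_t(s,a)}$, the whole right-hand side is bounded by $B_t(\pi,2H\eta,\delta,\epsilon)$ up to an additive slack, exactly reproducing the $\xi^{(t)}_{s,a}(\epsilon,\eta,\delta)$-terms used inside $B_t$.

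The delicate part is the union bound over the (uncountable) space of policies that the ``for all $\pi$ simultaneously'' requires; this is what forces the two extra contributions inside the $U$ of $B_t$: the $H\log(|\mathcal S||\mathcal A|H)$ term pays for a union bound over state-action-step tuples while sharpening the per-$(s,a)$ concentration, and the $|\mathcal S|\log\lceil 4\eta H/\epsilon\rceil$ term pays for an $\epsilon$-net over the class of deterministic value functions $V:\mathcal{S}\to[0,2\eta H]$ at precision $\epsilon$ (yielding $\lceil 4\eta H/\epsilon\rceil^{|\mathcal S|}$ covering elements, matching the log-cardinality in $U$). Discretizing each $V^\pi_{h+1}$ to the nearest cover point introduces the additive $\epsilon$ slack in the final bound, and once the concentration holds uniformly on the cover, one obtains $\widehat{B}_t(\pi,\eta,\delta) \leq B_t(\pi,\eta,\delta,\epsilon) + B_t(\pi,2H\eta,\delta,\epsilon) + \epsilon$, which is upper bounded by $2B_t(\pi,2H\eta,\delta,\epsilon)+\epsilon$ by monotonicity of the $\xi$'s in their scale parameter.

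The main obstacle I anticipate is packaging the simulation-error term so that it has the same additive bonus form as $B_t(\pi,2H\eta,\delta,\epsilon)$ rather than a separate concentration expression; this is precisely why the paper defines the two parallel bonus functions with matching structure, and one must be careful to ensure that the covering/net argument is applied at the level of the induced value functions (which live in $[0,2\eta H]$, hence the factor $2H$ in front of $\eta$) rather than policies directly, so that the cover size remains polynomial and matches the $U$ inside $B_t(\pi,2H\eta,\delta,\epsilon)$ exactly.
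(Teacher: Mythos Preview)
Your decomposition and final chain of inequalities are correct and match the paper's, but you are working much harder than necessary. The paper does not redo the simulation-lemma-plus-covering argument here; instead it invokes, as a black box, the uniform trajectory concentration result already stated (Lemma~\ref{lemma::trajectory_score_concentration2}, which is Lemma~B.2 of \cite{chatterji21}). Since $f(\tau)=\sum_{h=1}^{H-1}\xi^{(t)}_{s_h,a_h}(\eta,\delta)$ satisfies $\|f\|_\infty\le 2H\eta$, that lemma immediately gives $\widehat{B}_t(\pi,\eta,\delta)\le \mathbb{E}_{\mathbb{P}^\pi}[f]+B_t(\pi,2H\eta,\delta,\epsilon)+\epsilon$ for all $\pi$ simultaneously; the first expectation is then bounded by $B_t(\pi,2H\eta,\delta,\epsilon)$ using only the pointwise monotonicity $\xi^{(t)}_{s,a}(\eta,\delta)\le\xi^{(t)}_{s,a}(\epsilon,\eta,\delta)\le\xi^{(t)}_{s,a}(\epsilon,2H\eta,\delta)$, exactly as you do. So your inline derivation of the model-mismatch term via the performance-difference identity and a value-function net is essentially a reproof of Lemma~\ref{lemma::trajectory_score_concentration2}; it buys you an explanation of \emph{why} the $U$ inside $B_t$ carries the extra $H\log(|\mathcal S||\mathcal A|H)$ and $|\mathcal S|\log\lceil 4\eta H/\epsilon\rceil$ terms (your identification of these with the state-action-step union bound and the $\epsilon$-cover of $[0,2\eta H]^{|\mathcal S|}$ is exactly right), whereas the paper simply imports that structure and gets a two-line proof. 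One small wrinkle: your passing mention of bounding the mismatch via the $\ell_1$ deviation $\|\widehat{\mathbb P}_t-\mathbb P\|_1\lesssim\sqrt{|\mathcal S|/N_t(s,a)}$ would not reproduce the stated $\xi^{(t)}$ form (it loses a $\sqrt{|\mathcal S|}$); the covering route you describe afterwards is the one that actually matches $B_t(\pi,2H\eta,\delta,\epsilon)$.
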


\begin{proof}
Recall that,
\begin{equation*}
 \widehat{B}_t(\pi, \eta, \delta) =  \mathbb{E}_{s_1 \sim \rho, \tau \sim \hat{\mathbb{P}}_t^{\pi}(\cdot | s_1)} \left[ \sum_{h=1}^{H-1} \xi_{s_h, a_h}^{(t)}(\eta, \delta)  \right].
\end{equation*}
 Let $f : \Gamma \rightarrow \mathbb{R}$ be defined as,
 \begin{equation*}
     f(\tau ) =  \sum_{h=1}^{H-1} \xi_{s_h, a_h}^{(t)}(\eta). 
 \end{equation*}
 It is easy to see that $f(\tau) \in (0, 2\eta H] $ for all $\tau \in \Gamma$. Therefore, a direct application of Lemma~\ref{lemma::trajectory_score_concentration2} implies that with probability at least $1-\delta$ and simultaneously for all $\pi$, and $t \in \mathbb{N}$,
 \begin{equation*}
 \widehat{B}_t(\pi, \eta, \delta)  \leq \mathbb{E}_{s_1 \sim \rho, \tau \sim \mathbb{P}^{\pi}(\cdot | s_1)} \left[ \sum_{h=1}^{H-1} \xi_{s_h, a_h}^{(t)}(\eta, \delta)  \right] + B_t(\pi, 2H\eta, \delta, \epsilon) +\epsilon
 \end{equation*}

 Since $\xi^{(t)}_{s,a}(\epsilon, \eta, \delta) \geq  \xi_{s, a}^{(t)}(\eta, \delta)$ for all $\epsilon >0$, $s,a \in \mathcal{S} \times \mathcal{A}$ and $\xi^{(t)}_{s,a}(\epsilon, \eta, \delta)$ is monotonic in $\eta$ we conclude that,
 
 \begin{align*}
     \mathbb{E}_{s_1 \sim \rho, \tau \sim \mathbb{P}^{\pi}(\cdot | s_1)} \left[ \sum_{h=1}^{H-1} \xi_{s_h, a_h}^{(t)}(\eta, \delta)  \right] &\leq  \mathbb{E}_{s_1 \sim \rho, \tau \sim \mathbb{P}^{\pi}(\cdot | s_1)} \left[ \sum_{h=1}^{H-1} \xi_{s_h, a_h}^{(t)}(\epsilon, \eta, \delta)  \right]\\
     &\leq \mathbb{E}_{s_1 \sim \rho, \tau \sim \mathbb{P}^{\pi}(\cdot | s_1)} \left[ \sum_{h=1}^{H-1} \xi_{s_h, a_h}^{(t)}(\epsilon, 2H\eta, \delta)  \right]  \\
     &= B_t(\pi, 2H\eta, \delta , \epsilon)
 \end{align*}
 
 Combining these inequalities the result follows.
 
\end{proof}

Let $\bar{\mathcal{E}}_3$ such that for all $T \in \mathbb{N}$ be the event that,
\begin{align*}
\sum_{t \in [T]} 4\widehat{B}_t(\pi_t^1, 4SB, \delta) + 4\widehat{B}_t(\pi_t^2, 4SB, \delta) \leq \epsilon T +  \sum_{t \in [T]} 8{B}_t(\pi_t^1, 8HSB, \delta, \epsilon) + 8{B}_t(\pi_t^2, 8HSB, \delta, \epsilon) 
\end{align*}
Invoking Lemmas~\ref{lemma::regret_bound_unknown_support} and~\ref{lemma::b_hat_to_bt} we can show $\bar{\mathcal{E}}_3$  occurs with probability at least $1-2\delta$. Let's bound the sum $\sum_{t \in [T]} 8{B}_t(\pi_t^1, 8HSB, \delta, \epsilon) + 8{B}_t(\pi_t^2, 8HSB, \delta, \epsilon) $. Consider the martingale difference sequences  $\{ {B}_t(\pi_t^1, 8HSB, \delta, \epsilon)  -   \sum_{h=1}^{H-1} \xi_{s^1_{t,h}, a^1_{t,h}}^{(t)}(\epsilon, 8HSB, \delta) \}_{t=1}^\infty$    and $\{ {B}_t(\pi_t^2, 8HSB, \delta, \epsilon)  -   \sum_{h=1}^{H-1} \xi_{s^2_{t,h}, a^2_{t,h}}^{(t)}(\epsilon, 8HSB, \delta) \}_{t=1}^\infty$ each with norm upper bound $32H^2SB$. By an anytime Hoeffding inequality (see Lemma~\ref{lemma::matingale_concentration_anytime} ) (since $\xi_{ s,a}( \epsilon, \eta, \delta) \leq 2\eta$ and therefore $\sum_{h} \xi_{s_h, a_h}(\epsilon, \eta, \delta) \leq 2H\eta $) applied to  with probability at least $1-2\delta$ for all $T \in \mathbb{N}$ simultaneously
\begin{align*}
    \sum_{t \in [T]} 8{B}_t(\pi_t^1, 4SB, \delta, \epsilon) + 8{B}_t(\pi_t^2, 4SB, \delta, \epsilon) \leq  8\sum_{t\in[T]}  \Big(\sum_{h=1}^{H-1} \xi_{s^1_{t,h}, a^1_{t,h}}^{(t)}(\epsilon, 8HSB, \delta)    + 
    \sum_{h=1}^{H-1} \xi_{s^2_{t,h}, a^2_{t,h}}^{(t)}(\epsilon, 8HSB, \delta)  \Big)  + \mathbf{I}.
\end{align*}
Where $\mathbf{I} = 128HSB \sqrt{ T H\log\left(  \frac{6\log(TH)}{\delta}  \right) }$.
In order to bound the remaining empirical error terms, we to the following standard result,

\begin{lemma}\label{lemmma::standard_error_bound}
For $i \in \{1,2\}$ the empirical sum of errors satisfies the following bound  
\begin{align*}
\pushleft{    \sum_{t\in[T]} \sum_{h=1}^{H-1} \xi_{s^i_{t,h}, a^i_{t,h}}^{(t)}(\epsilon, 8HSB, \delta)       \leq }\\
    64 HSB\sqrt{\left(H\log(|\cS||\cA|H)+|\cS|\log\left(\ceil{\frac{32H^2SB}{\epsilon}}\right)+\log\left(\frac{6\log(HTs)}{\delta}\right)\right) |\mathcal{S}||\mathcal{A}| TH }.
\end{align*}
\end{lemma}
\begin{proof}
Let's rewrite this sum by instead summing over states and actions,

\begin{align*}
\pushleft{\sum_{t\in[T]} \sum_{h=1}^{H-1} \xi_{s^i_{t,h}, a^i_{t,h}}^{(t)}(\epsilon, 8HSB, \delta)     =}\\
    \sum_{ s \in \mathcal{S}} \sum_{a \in \mathcal{A}} \sum_{t=1}^{N_{T+1}(s,a)} \min\left(16HSB, 32HSB \sqrt{ \frac{H\log(|\cS||\cA|H)+|\cS|\log\left(\ceil{\frac{32H^2SB}{\epsilon}}\right)+\log\left(\frac{6\log(t)}{\delta}\right)}{t}}\right)\\
    \leq   32HSB\sqrt{ H\log(|\cS||\cA|H)+|\cS|\log\left(\ceil{\frac{32H^2SB}{\epsilon}}\right)+\log\left(\frac{6\log(HT)}{\delta}\right) }  \sum_{ s \in \mathcal{S}} \sum_{a \in \mathcal{A}} \sum_{t=1}^{N_{T+1}(s,a)} \frac{1}{\sqrt{t}} \\
    \leq  32HSB\sqrt{ H\log(|\cS||\cA|H)+|\cS|\log\left(\ceil{\frac{32H^2SB}{\epsilon}}\right)+\log\left(\frac{6\log(HT)}{\delta}\right) } \sum_{ s \in \mathcal{S}} \sum_{a \in \mathcal{A}} 2\sqrt{N_{T+1}(s,a)} \\
    \leq 64 HSB\sqrt{\left(H\log(|\cS||\cA|H)+|\cS|\log\left(\ceil{\frac{32H^2SB}{\epsilon}}\right)+\log\left(\frac{6\log(HT)}{\delta}\right)\right) |\mathcal{S}||\mathcal{A}| TH }.
\end{align*}

The result follows.
\end{proof}

We will use Lemma~\ref{lemmma::standard_error_bound} with $\epsilon = 1/T$. As a consequence of Lemma~\ref{lemmma::standard_error_bound} and Lemma~\ref{lemma::regret_bound_unknown_support} we see that when $\mathcal{E}_{\delta} \cap \mathcal{E}_2 \cap \bar{\mathcal{E}}_0 \cap \bar{\mathcal{E}}_{-1} \cap \bar{\mathcal{E}}_{2} \cap \bar{\mathcal{E}}_3$ holds 

\begin{align*}
    R_T &\leq 2\gamma_T \sqrt{2Td\log\left( 1 + \frac{TB}{d}\right) }+ \widetilde{\mathcal{O}}\left( H^{3/2} \sqrt{|\mathcal{A}||\mathcal{S}|TH} + H|\mathcal{S}|\sqrt{|\mathcal{A}|TH } + H\sqrt{TH} \right).
\end{align*}
Now it remains to bound term $\gamma_T$. 
From now on let's set $\epsilon =\min(1/T, 8HSB)$ and let $\bar{\mathcal{E}}_4$ be the event that for all $t \in \mathbb{N}$ and all $i \in \{1,2\}$,
$$\widehat{B}_t\left(\pi^i_\ell, 2SB, \frac{\delta'}{8\ell^3\mathcal{A}^{\mathcal{S}} }\right) \leq 2B_t\left(\pi_\ell^i, 4HSB, \frac{\delta'}{8\ell^3\mathcal{A}^{\mathcal{S}} }, \epsilon\right) +\epsilon $$ for all $t \in \mathbb{N}$ and all $i \in \{1,2\}$. As a consequence of Lemma~\ref{lemma::b_hat_to_bt} we can bound $\mathbb{P}\left(\bar{\mathcal{E}}_4\right) \geq 1-2\delta$. Squaring both sides,
\begin{small}
\begin{align*}
    \left(\widehat{B}_t(\pi^i_\ell, 2SB, \delta'_\ell) \right)^2 &\leq \left(2B_t(\pi_\ell^i, 4HSB, \delta'_\ell, \epsilon) +\epsilon\right)^2 \\
    &\stackrel{(i)}{\leq} 4 \left(B_t(\pi_\ell^i, 4HSB, \delta'_\ell, \epsilon) \right)^2 + 16\epsilon HSB + \epsilon^2 \\
    &\leq 4 \left(B_t(\pi_\ell^i, 4HSB, \delta'_\ell, \epsilon) \right)^2 + 24\epsilon HSB 
\end{align*}
\end{small}
Where $\delta'_\ell = \frac{\delta'}{8\ell^3\mathcal{A}^{\mathcal{S}} }$. Inequality $(i)$ used that $B(\pi, \eta, \delta, \epsilon) \leq 2H\eta$. The last inequality holds because $\epsilon \leq 8HSB$. Therefore if $\bar{\mathcal{E}}_4$ holds for all $t \in \mathbb{N}$,
\begin{align*}
\gamma_t \leq \sqrt{2} \left( 4\kappa \beta_t(\delta) + \alpha_{d, T}(\delta)  \right) + \frac{1}{t} + 4\sqrt{\sum_{\ell=1}^{t-1} {B}^2_t\left(\pi_\ell^1, 4HSB, \delta'_\ell, \epsilon \right) + B^2_t\left(\pi_\ell^2, 4HSB, \delta'_\ell\right)  + \mathbf{I}  } . 
\end{align*}
Where $\mathbf{I} = 96(t-1)\epsilon HSB$. We are just left with bounding the sum of squares $\sum_{\ell=1}^{t-1} \left({B}_t\left(\pi_\ell^1, 4HSB, \frac{\delta'}{8\ell^3\mathcal{A}^{\mathcal{S}} }, \epsilon \right)\right)^2 + \left(B_t\left(\pi_\ell^2, 4HSB, \frac{\delta'}{8\ell^3\mathcal{A}^{\mathcal{S}} }\right)\right)^2$.
 
\begin{lemma}\label{lemma:squared_b_l_squared_bound}
Let $\eta , \epsilon >0$ and $\delta,\delta' \in (0,1)$ and define $\bar{\mathcal{E}}_5(\delta')$ be the event that for all $t \in \mathbb{N}$ and $i \in \{1,2\}$
\begin{align*}
\pushleft{\sum_{i \in \{1,2\}} \sum_{\ell=1}^{t-1} \left(B_\ell(\pi_\ell^{i}, \eta, \delta/\ell^3) \right)^2\leq 12\eta^2H^2 \left(  1.4 \ln \ln \left(2 \left(\max\left(4\eta^2 H t , 1\right)\right)\right) + \ln \frac{5.2}{\delta'} + 1\right) +}\\
\qquad 64\eta^2H \left(H\log(|\cS||\cA|H)+|\cS|\log\left(\ceil{\frac{4\eta H}{\epsilon}}\right)+\log\left(\frac{6\log(HT)}{\delta}\right) \right) |\mathcal{S}||\mathcal{A}| \log(TH + |\mathcal{S}||\mathcal{A}| )  
\end{align*}
 Then $\mathbb{P}( \bar{\mathcal{E}}_5(\delta')) \geq 1-2\delta'$.
\end{lemma}

\begin{proof}
Observe that,
\begin{align*}
   \left( B_\ell(\pi_t^1, \eta, \frac{\delta}{\ell^3}, \epsilon)  \right)^2+  \left( B_\ell(\pi_t^1, \eta, \frac{ \delta }{\ell^3}, \epsilon) \right)^2 &= \left(\mathbb{E}_{s^1_1 \sim \rho, \tau \sim \mathbb{P}^{\pi_t^1}(\cdot | s^1_1)} \left[ \sum_{h=1}^{H-1} \xi_{s^1_h, a^1_h}^{(t)}(\epsilon, \eta, \frac{\delta}{\ell^3})  \right]\right)^2 + \\
   &\quad \left( \mathbb{E}_{s^2_1 \sim \rho, \tau \sim \mathbb{P}^{\pi_t^2}(\cdot | s^2_1)} \left[ \sum_{h=1}^{H-1} \xi_{s^2_h, a^2_h}^{(t)}(\epsilon, \eta, \frac{\delta}{\ell^3})  \right] \right)^2\\
    &\stackrel{(i)}{\leq} H \mathbb{E}_{s^1_1 \sim \rho, \tau \sim \mathbb{P}^{\pi_t^1}(\cdot | s^1_1)} \left[ \sum_{h=1}^{H-1} \left(\xi_{s^1_h, a^1_h}^{(t)}(\epsilon, \eta, \frac{\delta}{\ell^3}) \right)^2 \right] +\\
    &\quad H\mathbb{E}_{s^2_1 \sim \rho, \tau \sim \mathbb{P}^{\pi_t^2}(\cdot | s^2_1)} \left[ \sum_{h=1}^{H-1} \left(\xi_{s^2_h, a^2_h}^{(t)}(\epsilon, \eta, \frac{\delta}{\ell^3})  \right)^2\right]
\end{align*}
Where inequality $(i)$ is a consequence of $\left(\mathbb{E}\left[ \sum_{h=1}^H a_h \right] \right)^2 \leq H\mathbb{E}\left[ \sum_{h=1}^H a_h^2 \right]  $. Define the martingale-difference sequences for $i \in \{ 1,2\}$,
\begin{align*}
    D_\ell^{(i)} &=  \mathbb{E}_{s^i_1 \sim \rho, \tau \sim \mathbb{P}^{\pi_\ell^i}(\cdot | s^i_1)} \left[ \sum_{h=1}^{H-1} \left(\xi_{s^i_h, a^i_h}^{(\ell)}(\epsilon, \eta, \delta) \right)^2 \right]  - \sum_{h=1}^{H-1} \left(\xi_{s^i_h, a^i_h}^{(\ell)}(\epsilon, \eta, \delta) \right)^2
\end{align*}

Since $\xi^{(\ell)}_{s,a}(\epsilon, \eta, \delta) \leq 2\eta$, we see that $ \left| D_{\ell}^{(i)} \right| \leq  8\eta^2 H$. Observe that for $i \in \{1,2\}$,

\begin{align*}
    \mathrm{Var}^{(i)}_\ell\left(  \sum_{h=1}^{H-1} \left(\xi_{s^i_h, a^i_h}^{(\ell)}(\epsilon, \eta, \delta) \right)^2  \right) &\leq  \mathbb{E}_{s^i_1 \sim \rho, \tau \sim \mathbb{P}^{\pi_\ell^i}(\cdot | s^i_1)} \left[\left\{ \sum_{h=1}^{H-1} \left(\xi_{s^i_h, a^i_h}^{(\ell)}(\epsilon, \eta, \delta) \right)^2\right\}^2 \right] \\
    &\stackrel{(i)}{\leq} 4\eta^2 H  \mathbb{E}_{s^i_1 \sim \rho, \tau \sim \mathbb{P}^{\pi_\ell^i}(\cdot | s^i_1)} \left[ \sum_{h=1}^{H-1} \left(\xi_{s^i_h, a^i_h}^{(\ell)}(\epsilon, \eta, \delta) \right)^2 \right] \\
   &\leq 16\eta^4 H^2 
\end{align*}

Where $(i)$ follows follows because for $\xi^{(\ell)}_{s,a}(\epsilon, \eta, \delta) \leq 2\eta$.

Since the variance can be bounded by the mean, we can make use of a Uniform Empirical Bernstein Bound from Lemma~\ref{lem:uniform_emp_bernstein}. Let $S^{(i)}_t= \sum_{\ell=1}^t D_\ell^{(i)}$ for $i \in \{1,2\}$ and $W^{(i)}_t = \sum_{\ell=1}^t    \mathrm{Var}^{(i)}_\ell\left(  \sum_{h=1}^{H-1} \left(\xi_{s_h, a_h}^{(\ell,i)}(\epsilon, \eta, \delta) \right)^2  \right)$. Let $c = 8\eta^2H $ and $m = 4\eta^2H$. With probability $1-\delta'$ for all $t \in \mathbb{N}$,


\begin{align*}
    \sum_{\ell=1}^{t-1} D_\ell^{(i)} &\leq  \sqrt{\max\left( 4\eta^2H \sum_{\ell = 1}^{t-1}   \mathbb{E}_{s_1 \sim \rho, \tau \sim \mathbb{P}^{\pi_\ell^i}(\cdot | s_1)} \left[ \sum_{h=1}^{H-1} \left(\xi_{s_h, a_h}^{(\ell, i)}(\epsilon, \eta, \delta) \right)^2 \right], 4\eta^2 H\right) \left(  1.4 \ln \ln \left(2 \left(\max\left(4\eta^2 H t , 1 \right)\right)\right) + \ln \frac{5.2}{\delta'} \right)  } \\
    &+  3.28 \eta^2 H\eta   \left( 1.4 \ln \ln \left(2 \left(\max\left(4\eta^2 H t , 1\right)\right)\right) + \ln \frac{5.2}{\delta'}\right) \\
    &\leq \sqrt{\left( 4\eta^2H \sum_{\ell = 1}^{t-1}   \mathbb{E}_{s_1 \sim \rho, \tau \sim \mathbb{P}^{\pi_\ell^i}(\cdot | s_1)} \left[ \sum_{h=1}^{H-1} \left(\xi_{s_h, a_h}^{(\ell, i)}(\epsilon, \eta, \delta) \right)^2 \right] + 4\eta^2H \right) \left(  1.4 \ln \ln \left(2 \left(\max\left(4\eta^2 H t , 1 \right)\right)\right) + \ln \frac{5.2}{\delta'} \right)  } \\
    &+  3.28 \eta^2 H   \left( 1.4 \ln \ln \left(2 \left(\max\left(4\eta^2 H t , 1\right)\right)\right) + \ln \frac{5.2}{\delta'}\right)
\end{align*}

Since $\sqrt{ab} \leq \frac{ a + b }{2}$,

\begin{align*}
      \sum_{\ell=1}^{t-1} D_\ell^{(i)} &\leq \frac{1}{2}\mathbb{E}_{s_1 \sim \rho, \tau \sim \mathbb{P}^{\pi_\ell^i}(\cdot | s_1)} \left[ \sum_{h=1}^{H-1} \left(\xi_{s_h, a_h}^{(\ell, i)}(\epsilon, \eta, \delta) \right)^2 \right] + \\
      &\quad 2\eta^2 H + \underbrace{(3.28 \eta^2 H+ 2\eta^2 H)}_{\leq 6 \eta^2 H} \left(  1.4 \ln \ln \left(2 \left(\max\left(4\eta^2 H t , 1\right)\right)\right) + \ln \frac{5.2}{\delta'}\right). 
\end{align*}

Therefore with high probability for $i \in \{1, 2\}$,
\begin{align*}
\mathbb{E}_{s_1 \sim \rho, \tau \sim \mathbb{P}^{\pi_\ell^i}(\cdot | s_1)} \left[ \sum_{h=1}^{H-1} \left(\xi_{s_h, a_h}^{(\ell, i)}(\epsilon, \eta, \delta) \right)^2 \right]
&\leq 2\sum_{\ell=1}^{t-1}\sum_{h=1}^{H-1} \left(\xi_{s_h, a_h}^{(\ell,i)}(\epsilon, \eta, \delta) \right)^2 + 4\eta^2 H + \\
&\quad 6\eta^2H \left(  1.4 \ln \ln \left(2 \left(\max\left(4\eta^2 H t , 1\right)\right)\right) + \ln \frac{5.2}{\delta'}\right). 
\end{align*}

Therefore with probability $1-2\delta'$,
\begin{align*}
\sum_{i \in \{1,2\}} \sum_{\ell=1}^{t-1} \left(B_\ell(\pi_\ell^{i}, \eta, \delta/\ell^3) \right)^2&\leq 2H\sum_{i \in \{1,2\}}\sum_{\ell=1}^{t-1}\sum_{h=1}^{H-1} \left(\xi_{s_h, a_h}^{(\ell,i)}(\epsilon, \eta, \delta) \right)^2  +  \\
&\quad 12\eta^2H^2 \left(  1.4 \ln \ln \left(2 \left(\max\left(4\eta^2 H t , 1\right)\right)\right) + \ln \frac{5.2}{\delta'} + 1\right). 
\end{align*}

We are left with the task of bounding the terms $\sum_{\ell=1}^{t-1}   \sum_{h=1}^{H-1} \left(\xi_{s_h, a_h}^{(\ell,i)}(\epsilon, \eta, \delta) \right)^2$.

Let's rewrite this sum by instead summing over states and actions,

\begin{align*}
  \pushleft{  \sum_{t\in[T]} \sum_{h=1}^{H-1} \left(\xi_{s^i_{t,h}, a^i_{t,h}}^{(t)}(\epsilon, \eta, \delta)   \right)^2  =}\\
    \sum_{ s \in \mathcal{S}} \sum_{a \in \mathcal{A}} \sum_{t=1}^{N_{T+1}(s,a)} \min\left(4\eta^2, 16\eta^2  \frac{H\log(|\cS||\cA|H)+|\cS|\log\left(\ceil{\frac{4\eta H}{\epsilon}}\right)+\log\left(\frac{6\log(t)}{\delta}\right)}{t}\right)\\
    =\sum_{ s \in \mathcal{S}} \sum_{a \in \mathcal{A}} \sum_{t=1}^{N_{T+1}(s,a)}  16\eta^2  \frac{H\log(|\cS||\cA|H)+|\cS|\log\left(\ceil{\frac{4\eta H}{\epsilon}}\right)+\log\left(\frac{6\log(t)}{\delta}\right)}{t}\\
   = 16\eta^2 \left(H\log(|\cS||\cA|H)+|\cS|\log\left(\ceil{\frac{4\eta H}{\epsilon}}\right)+\log\left(\frac{6\log(HT)}{\delta'}\right) \right) \sum_{ s \in \mathcal{S}} \sum_{a \in \mathcal{A}} \sum_{t=1}^{N_{T+1}(s,a)}   \frac{1}{t}\\
   32\eta^2 \left(H\log(|\cS||\cA|H)+|\cS|\log\left(\ceil{\frac{4\eta H}{\epsilon}}\right)+\log\left(\frac{6\log(HT)}{\delta'}\right) \right) \sum_{ s \in \mathcal{S}} \sum_{a \in \mathcal{A}} \log\left(  N_{T+1}(s,a) +1 \right)\\
   \leq 32\eta^2 \left(H\log(|\cS||\cA|H)+|\cS|\log\left(\ceil{\frac{4\eta H}{\epsilon}}\right)+\log\left(\frac{6\log(HT)}{\delta'}\right) \right) |\mathcal{S}||\mathcal{A}| \log(TH + |\mathcal{S}||\mathcal{A}| )
\end{align*}

Therefore with probability $1-2\delta'$,

\begin{align*}
\pushleft{\sum_{i \in \{1,2\}} \sum_{\ell=1}^{t-1} \left(B_\ell(\pi_\ell^{i}, \eta, \delta/\ell^3) \right)^2\leq 12\eta^2H^2 \left(  1.4 \ln \ln \left(2 \left(\max\left(4\eta^2 H t , 1\right)\right)\right) + \ln \frac{5.2}{\delta'} + 1\right) +}\\
\qquad 64\eta^2H \left(H\log(|\cS||\cA|H)+|\cS|\log\left(\ceil{\frac{4\eta H}{\epsilon}}\right)+\log\left(\frac{6\log(HT)}{\delta'}\right) \right) |\mathcal{S}||\mathcal{A}| \log(TH + |\mathcal{S}||\mathcal{A}| )  
\end{align*}
The result follows. \end{proof}

The main takeaway from this lemma is that the sum of the square errors grows only logarithmically in $T$. Applying this bound to $\gamma_T$ and setting $\epsilon = O(1/T)$ we obtain,
\begin{align*}
  \gamma_T &\leq \sqrt{2}\left( 4\kappa \beta_T(\delta) + \alpha_{d, T}(\delta)  \right) + 2\sqrt{  \omega_T(\delta)    } +\frac{1}{T} = \widetilde{\mathcal{O}}\left( \kappa \sqrt{d} +  H^2 \sqrt{|\mathcal{S}||\mathcal{A}|} + H^{3/2}|\mathcal{S}|\sqrt{ |\mathcal{A}|} \right)
\end{align*}

Where 
\begin{align*}
    \omega_T(\delta) &= 192 H^4 S62B^2 \left(  1.4 \ln \ln \left(2 \left(\max\left(64 H^3 S^2B^2 t , 1\right)\right)\right) + \ln \frac{5.2}{\delta'} + 1\right) + \\
    &\qquad 1024H^3S^2B^2 \left(H\log(|\cS||\cA|H)+|\cS|\log\left(\ceil{\frac{64 H^3 S^2B^2}{\epsilon}}\right)+\log\left(\frac{6\log(HT)}{\delta'}\right) \right) |\mathcal{S}||\mathcal{A}| \log(TH + |\mathcal{S}||\mathcal{A}| )  
\end{align*}
Applying this bound to $\gamma_T$ and setting $\epsilon = 1/96T$ we obtain,
\begin{align*}
  \gamma_T \leq \sqrt{2}\left( 4\kappa \beta_T(\delta) + \alpha_{d, T}(\delta)  \right) + 2\sqrt{  \omega_T(\delta) +  HSB  } +\frac{1}{T} 
\end{align*}

Combining these observations we can derive our main result,

\begin{thm}[Formal version of Theorem~\ref{theorem:main_unknown_model_simple}]\label{theorem:main_unknown_model_appendix}
If $\mathcal{E}_{\delta} \cap \mathcal{E}_2 \cap \bar{\mathcal{E}}_0 \cap \bar{\mathcal{E}}_{-1} \cap \bar{\mathcal{E}}_{2} \cap \bar{\mathcal{E}}_3 \cap \bar{\mathcal{E}}_4 \cap \bar{\mathcal{E}}_5(\frac{\delta}{2})$ holds then the regret of $\mathbf{LPbRL}$ satisfies, 
\begin{align*}
    R_T &\leq 2\left( 4\kappa \beta_t(\delta) + \alpha_{d, T}(\delta)   + 2\sqrt{  \omega_T(\delta) + 96(t-1)\epsilon HSB  } +\frac{1}{T}\right) \sqrt{2Td\log\left( 1 + \frac{TB}{d}\right) }+\\
    &\quad 128H^{3/2} SB \sqrt{ T H\log\left(  \frac{6\log(T)}{\delta}  \right) } + \\
    &\quad  1024 HSB\sqrt{\left(H\log(|\cS||\cA|H)+|\cS|\log\left(\ceil{\frac{32H^2SB}{\epsilon}}\right)+\log\left(\frac{6\log(HT)}{\delta}\right)\right) SA TH }.
\end{align*}
For all $T \in \mathbb{N}$ simultaneously. Where $\mathbb{P}\left( \mathcal{E}_{\delta} \cap \mathcal{E}_2 \cap \bar{\mathcal{E}}_0 \cap \bar{\mathcal{E}}_{-1} \cap \bar{\mathcal{E}}_{2} \cap \bar{\mathcal{E}}_3 \cap \bar{\mathcal{E}}_4 \cap \bar{\mathcal{E}}_5(\frac{\delta}{2})\right) \geq 1-15\delta$.
\end{thm}



\subsection{Supporting Related Work Lemmas}


We will make use of the following Lemma (see Lemma B.1 in~\cite{chatterji21}),

\begin{lemma}\label{lemma::trajectory_score_concentration1}
For any \underline{fixed} policy $\pi $, and any function $f: \Gamma \rightarrow \mathbb{R}$ satisfying $\max_{\tau \in \Gamma} |f(\tau)| \leq \eta$, with probability at least $1-\delta$ for all $t \in \mathbb{N}$,
\begin{equation*}
    \mathbb{E}_{s_1 \sim \rho, \tau \sim \mathbb{P}^{\pi}(\cdot | s_1)} \left[ f(\tau)  \right]  -  \mathbb{E}_{s_1 \sim \rho, \tau \sim \hat{\mathbb{P}}_t^{\pi}(\cdot | s_1)} \left[ f(\tau)  \right] \leq    \widehat{B}_t(\pi, \eta, \delta)
\end{equation*}
\end{lemma}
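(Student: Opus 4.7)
The plan is to replicate the simulation-lemma template used in Lemma B.1 of \cite{chatterji21}: decompose the difference $\mathbb{E}_{\mathbb{P}^\pi}[f] - \mathbb{E}_{\widehat{\mathbb{P}}_t^\pi}[f]$ into a telescoping sum of one-step simulation errors, bound each one-step error by the pointwise bonus $\xi^{(t)}_{s_h,a_h}(\eta,\delta)$ via a uniform concentration argument, and aggregate through an outer expectation under $\widehat{\mathbb{P}}_t^\pi$ to recover $\widehat{B}_t(\pi,\eta,\delta)$.

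First I would introduce hybrid trajectory measures that interpolate between $\mathbb{P}^\pi$ and $\widehat{\mathbb{P}}_t^\pi$. For $h \in \{1,\ldots,H\}$ define $\mathbb{P}_h^\pi$ as the trajectory law under $\pi$ that uses $\widehat{\mathbb{P}}_t$ for transitions $1,\ldots,h-1$ and the true $\mathbb{P}$ for transitions $h,\ldots,H-1$, so that $\mathbb{P}_1^\pi = \mathbb{P}^\pi$ and $\mathbb{P}_H^\pi = \widehat{\mathbb{P}}_t^\pi$. Telescoping then yields $\mathbb{E}_{\mathbb{P}^\pi}[f] - \mathbb{E}_{\widehat{\mathbb{P}}_t^\pi}[f] = \sum_{h=1}^{H-1}\bigl(\mathbb{E}_{\mathbb{P}_h^\pi}[f] - \mathbb{E}_{\mathbb{P}_{h+1}^\pi}[f]\bigr)$. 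Adjacent hybrids have identical $\tau_{1:h}$-marginals and differ only at the $h$-th transition; introducing the (deterministic) history-dependent value function $V_{h+1}(\tau_{1:h},s_{h+1}) := \mathbb{E}_{(\mathbb{P},\pi)}[f(\tau) \mid \tau_{1:h}, s_{h+1}]$ and conditioning on $\tau_{1:h}$, each summand reduces to $\mathbb{E}_{\tau_{1:h} \sim \widehat{\mathbb{P}}_t^\pi}\bigl[\langle \mathbb{P}(\cdot\mid s_h,a_h) - \widehat{\mathbb{P}}_t(\cdot\mid s_h,a_h),\, V_{h+1}(\tau_{1:h},\cdot)\rangle\bigr]$. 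Since $\|V_{h+1}\|_\infty \leq \eta$ (because $|f|\leq \eta$), the inner product is absolutely bounded by $2\eta$, providing the trivial branch of the $\min$ in $\xi^{(t)}_{s,a}$.

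Second I would prove a uniform concentration bound on $\langle \mathbb{P}(\cdot\mid s,a) - \widehat{\mathbb{P}}_t(\cdot\mid s,a),\, V\rangle$ valid simultaneously for all $t \in \mathbb{N}$, every $(s,a)$, and every value function that could arise in the telescoping. Conditional on $N_t(s,a)$, the empirical measure $\widehat{\mathbb{P}}_t(\cdot\mid s,a)$ is the average of $N_t(s,a)$ i.i.d.\ samples from $\mathbb{P}(\cdot\mid s,a)$, so for each fixed $V$ with $\|V\|_\infty \leq \eta$ a time-uniform (line-crossing) Hoeffding inequality gives a deviation bound of the form $4\eta\sqrt{U/N_t(s,a)}$, where the iterated-logarithm term $\log(6\log N_t(s,a)/\delta)$ absorbs the anytime correction, the $\log(|\mathcal{S}||\mathcal{A}|)$ piece absorbs a union bound over the $|\mathcal{S}||\mathcal{A}|$ choices of $(s,a)$, and the $H\log(|\mathcal{S}||\mathcal{A}|)$ piece absorbs a union bound over the at most $(|\mathcal{S}||\mathcal{A}|)^{H-1}$ possible trajectory prefixes $\tau_{1:h-1}$ that index the relevant value functions $V_{h+1}(\tau_{1:h-1},\cdot,\cdot)$. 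Combined with the trivial $2\eta$ bound (needed when $N_t(s,a)=0$), this yields the stated $\xi^{(t)}_{s,a}(\eta,\delta) = \min(2\eta, 4\eta\sqrt{U/N_t(s,a)})$ with probability at least $1-\delta$.

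Finally, on this high-probability event, plugging the pointwise bound into each telescope summand and summing over $h$ gives $\mathbb{E}_{\mathbb{P}^\pi}[f] - \mathbb{E}_{\widehat{\mathbb{P}}_t^\pi}[f] \leq \mathbb{E}_{\tau \sim \widehat{\mathbb{P}}_t^\pi}\bigl[\sum_{h=1}^{H-1}\xi^{(t)}_{s_h,a_h}(\eta,\delta)\bigr] = \widehat{B}_t(\pi,\eta,\delta)$, which is exactly the claim. I expect the main obstacle to be controlling the size of the union bound: a naive cover of all $\eta$-bounded functions on $\mathcal{S}$ would introduce a $|\mathcal{S}|\log(\eta/\epsilon)$ factor, which is incompatible with the $H\log(|\mathcal{S}||\mathcal{A}|)$ appearing in $U$. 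The key observation is that, because $f$ and $\pi$ are fixed, the only value functions $V_{h+1}$ that actually appear in the telescoping are indexed by the finitely many possible histories $\tau_{1:h-1}$, so a history-based enumeration (rather than a function-class cover) is what produces exactly the target $U$.
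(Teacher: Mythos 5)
Your proposal is correct; note that the paper itself supplies no proof of Lemma~\ref{lemma::trajectory_score_concentration1}, importing it verbatim as Lemma B.1 of \cite{chatterji21}, and your reconstruction (hybrid-measure telescoping, a time-uniform Hoeffding bound per state--action pair, and a union bound over the finitely many history-indexed continuation value functions rather than a function-class cover) is essentially the argument given in that reference. In particular you correctly identify the one point where a naive approach would fail --- covering all $\eta$-bounded value functions would inflate $U$ by a $|\mathcal{S}|\log(\eta/\epsilon)$ term beyond the stated $H\log(|\mathcal{S}||\mathcal{A}|)$ --- and resolve it by exploiting that $f$ and $\pi$ are fixed.
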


We will also make use of the following Lemma (see Lemma B.2 from~\cite{chatterji21} ) corresponding to the uniform version of lemma~\ref{lemma::trajectory_score_concentration1}.

\begin{lemma}[Uniform version of Lemma~\ref{lemma::trajectory_score_concentration1} ] \label{lemma::trajectory_score_concentration2}
Let $\epsilon >0$. For any function $f: \Gamma \rightarrow \mathbb{R}$ satisfying $\max_{\tau \in \Gamma} |f(\tau)| \leq \eta$, for all policies $\pi$ simultaneously and all $t \in \mathbb{N}$,
\begin{equation*}
    \mathbb{E}_{s_1 \sim \rho, \tau \sim \widehat{\mathbb{P}}^{\pi}(\cdot | s_1)} \left[ f(\tau)  \right]  -  \mathbb{E}_{s_1 \sim \rho, \tau \sim \mathbb{P}_t^{\pi}(\cdot | s_1)} \left[ f(\tau)  \right] \leq  B_t(\pi, \eta, \delta, \epsilon) + \epsilon.
\end{equation*}
\end{lemma}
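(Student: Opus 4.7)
The plan is to upgrade the pointwise concentration bound of Lemma~\ref{lemma::trajectory_score_concentration1} to a uniform one by a standard covering-plus-union-bound argument over the policy class, with the final additive $\epsilon$ absorbing the discretization error introduced by the cover. Concretely, I would first invoke the $\epsilon$-net construction for the policy class $\Pi_\phi$ described in the preamble of this appendix (which follows Section~B of \cite{chatterji21}): for any $\epsilon > 0$ there is a finite set $\Pi_\phi(\epsilon)$ of cardinality at most $(1/\epsilon)^{2|\mathcal{S}|^2|\mathcal{A}|+d}$ such that every $\pi\in \Pi_\phi$ is within a $BS\epsilon + \epsilon H$ neighbourhood (in the relevant trajectory-distribution sense) of some $\pi'\in \Pi_\phi(\epsilon)$. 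This is what enables the $|\mathcal{S}|\log(\lceil 4\eta H/\epsilon\rceil)$ term that appears inside the definition of $U$ in $\xi^{(t)}_{s,a}(\epsilon,\eta,\delta)$.

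Next I would apply Lemma~\ref{lemma::trajectory_score_concentration1} to each $\pi'\in \Pi_\phi(\epsilon)$ and each state-action pair $(s,a)$, at confidence level $\delta / |\Pi_\phi(\epsilon)|$, and take a union bound. Since the extra $|\mathcal{S}|\log(\lceil 4\eta H/\epsilon\rceil) + H\log(|\mathcal{S}||\mathcal{A}|H)$ contribution inside $U$ in the definition of $\xi^{(t)}_{s,a}(\epsilon,\eta,\delta)$ is exactly $\log|\Pi_\phi(\epsilon)|$ up to constants, this yields, with probability at least $1-\delta$ simultaneously over all $\pi'\in \Pi_\phi(\epsilon)$ and all $t\in\mathbb{N}$,
\begin{equation*}
\mathbb{E}_{s_1\sim\rho,\,\tau\sim\widehat{\mathbb{P}}_t^{\pi'}(\cdot|s_1)}[f(\tau)] - \mathbb{E}_{s_1\sim\rho,\,\tau\sim\mathbb{P}^{\pi'}(\cdot|s_1)}[f(\tau)] \leq B_t(\pi',\eta,\delta,\epsilon).
\end{equation*}
The anytime-in-$t$ character of the bound is preserved by the same log-log trick used inside Lemma~\ref{lemma::trajectory_score_concentration1}, which already produces the $6\log(N_t(s,a))/\delta$ factor.

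Finally, for an arbitrary $\pi\in\Pi_\phi$ I would pick $\pi'\in\Pi_\phi(\epsilon)$ at distance at most $\epsilon$ and bound
\begin{equation*}
\bigl|\mathbb{E}_{\tau\sim \widehat{\mathbb{P}}_t^{\pi}}[f(\tau)] - \mathbb{E}_{\tau\sim \widehat{\mathbb{P}}_t^{\pi'}}[f(\tau)]\bigr| + \bigl|\mathbb{E}_{\tau\sim \mathbb{P}^{\pi}}[f(\tau)] - \mathbb{E}_{\tau\sim \mathbb{P}^{\pi'}}[f(\tau)]\bigr|
\end{equation*}
by $\epsilon$ (up to constants that are absorbed into the definition of the cover), using that $\|f\|_\infty \leq \eta$ and that the cover was built with respect to the $L^1$ distance on induced trajectory distributions. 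Similarly, one compares $B_t(\pi,\eta,\delta,\epsilon)$ with $B_t(\pi',\eta,\delta,\epsilon)$ on the same cover scale. Combining these two sources of error with the union bound above gives the stated inequality with the $+\epsilon$ slack.

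The main obstacle I expect is the last step: making the cover argument go through for \emph{history-dependent} policies rather than Markov ones, and verifying that the distance used to build $\Pi_\phi(\epsilon)$ actually controls the two expectation differences (under both the true and the empirical model) by $O(\epsilon)$ uniformly in $t$ and $\eta$. Here I would lean on exactly the covering construction of $\mathcal{P}_\phi(\mathcal{S},\mathcal{A})$ from Section~B of \cite{chatterji21}, noting that an $\epsilon$-perturbation of the parameters $(\rho',\mathbf{P}',\mathbf{w}')$ induces an $O(BS\epsilon + \epsilon H)$ change in any $\eta$-bounded trajectory functional, which is the source of the $4\eta H/\epsilon$ discretisation inside $U$. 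A secondary bookkeeping issue is to keep the empirical-model bonus $\widehat{B}_t$ of Lemma~\ref{lemma::trajectory_score_concentration1} upper bounded by the true-model bonus $B_t$ (with the inflated $U$) on the relevant good event, which is again a standard consequence of bounding $|N_t^{\widehat{\mathbb{P}}}(s,a) - N_t^{\mathbb{P}}(s,a)|$ by the same cover argument.
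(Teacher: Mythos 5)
First, a point of comparison: the paper does not actually prove this statement — it is imported verbatim as Lemma~B.2 of \cite{chatterji21} — so there is no in-paper argument to measure yours against. Your overall strategy (pointwise concentration, a finite cover, a union bound, and an additive $\epsilon$ absorbing the discretization error) is the right genre of argument, but two of your specific choices would not go through as written.

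The first gap is that you cover the wrong object. The cover $\Pi_\phi(\epsilon)$ from the appendix preamble has log-cardinality $(2|\mathcal{S}|^2|\mathcal{A}|+d)\log(1/\epsilon)$ and only contains optimal policies of perturbed problems, whereas the lemma must hold for \emph{all} policies and all $\eta$-bounded $f$; moreover the term $|\mathcal{S}|\log(\lceil 4\eta H/\epsilon\rceil)$ inside $U$ is not $\log|\Pi_\phi(\epsilon)|$ ``up to constants'' — it is the log covering number of the class of value-to-go functions $V:\mathcal{S}\to[-2\eta H,2\eta H]$ at resolution proportional to $\epsilon$. The route consistent with the definition of $\xi^{(t)}_{s,a}(\epsilon,\eta,\delta)$ is to write $\mathbb{E}_{\widehat{\mathbb{P}}_t^\pi}[f]-\mathbb{E}_{\mathbb{P}^\pi}[f]$ as a telescoping sum over steps of $\langle \widehat{\mathbb{P}}_t(\cdot\mid s_h,a_h)-\mathbb{P}(\cdot\mid s_h,a_h),\,V_{h+1}\rangle$ evaluated along true-model trajectories, and to union bound over an $\epsilon$-net of the possible $V_{h+1}$ for each $(s,a)$; uniformity over $\pi$ and $f$ then comes for free because every policy and function induces \emph{some} $V$ in the covered class. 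Your policy-cover version also runs into the difficulty you yourself flag: the metric on $\Pi_\phi(\epsilon)$ is built from fixed problem parameters and does not obviously control $|\mathbb{E}_{\widehat{\mathbb{P}}_t^\pi}[f]-\mathbb{E}_{\widehat{\mathbb{P}}_t^{\pi'}}[f]|$ uniformly over the time-varying empirical models $\widehat{\mathbb{P}}_t$. The second gap is circularity: routing through Lemma~\ref{lemma::trajectory_score_concentration1} yields a bound by the empirical-model bonus $\widehat{B}_t$, not the true-model bonus $B_t$, and the conversion you invoke at the end is exactly Lemma~\ref{lemma::b_hat_to_bt} — whose proof in this paper is an application of the very lemma you are trying to establish. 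The direct per-step decomposition avoids this because it produces the true-model expectation from the outset.
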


We will make use of the following standard bound on the covering number of the $l_2$ ball.

\begin{lemma}\label{lemma::covering_number_unit_ball}
For any $\epsilon \in(0,1]$ the $\epsilon-$covering number of the Euclidean ball in $\mathbb{R}^d$ with radius $r > 0$ i.e.. $\{ \mathbf{x} \in \mathbb{R}^d : \| \mathbf{x} \|_2 \leq r \} $ is upper bounded by $\left(\frac{1+2r}{\epsilon}\right)^d$.
\end{lemma}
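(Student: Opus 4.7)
The plan is to prove the standard covering-number bound via a volume comparison argument. First, I would let $\mathcal{N} \subset B_r := \{\mathbf{x} \in \mathbb{R}^d : \|\mathbf{x}\|_2 \leq r\}$ be a maximal $\epsilon$-separated subset of $B_r$, meaning that every pair of distinct points in $\mathcal{N}$ is at distance strictly greater than $\epsilon$, and $\mathcal{N}$ cannot be extended while preserving this property. By maximality, for any $\mathbf{y} \in B_r$ there must exist some $\mathbf{x} \in \mathcal{N}$ with $\|\mathbf{x} - \mathbf{y}\|_2 \leq \epsilon$ (otherwise $\mathbf{y}$ could be added to $\mathcal{N}$), so $\mathcal{N}$ forms an $\epsilon$-cover of $B_r$. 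Hence the covering number is at most $|\mathcal{N}|$, and it suffices to upper bound $|\mathcal{N}|$.

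Next, I would carry out the volume argument. Because points in $\mathcal{N}$ are pairwise $\epsilon$-separated, the open balls $\{\mathbf{x} + B_{\epsilon/2} : \mathbf{x} \in \mathcal{N}\}$ of radius $\epsilon/2$ centered at the points of $\mathcal{N}$ are pairwise disjoint. Moreover, each such ball is contained in the slightly enlarged ball $B_{r + \epsilon/2}$ (by the triangle inequality, since $\mathcal{N} \subset B_r$). Using that the Lebesgue volume of a Euclidean ball of radius $R$ in $\mathbb{R}^d$ scales as $c_d R^d$ for a dimension-dependent constant $c_d$, disjointness and containment give
\begin{equation*}
|\mathcal{N}| \cdot c_d (\epsilon/2)^d \;\leq\; c_d (r + \epsilon/2)^d,
\end{equation*}
so that $|\mathcal{N}| \leq \left( \frac{r + \epsilon/2}{\epsilon/2}\right)^d = \left( \frac{2r + \epsilon}{\epsilon}\right)^d = \left( 1 + \frac{2r}{\epsilon}\right)^d$.

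Finally, to match the form stated in the lemma, I would use the hypothesis $\epsilon \in (0,1]$ to loosen the bound: since $\epsilon \leq 1$, we have $2r + \epsilon \leq 2r + 1 = 1 + 2r$, and therefore
\begin{equation*}
|\mathcal{N}| \;\leq\; \left(\frac{2r + \epsilon}{\epsilon}\right)^d \;\leq\; \left(\frac{1 + 2r}{\epsilon}\right)^d,
\end{equation*}
which is precisely the claim. The only mildly subtle step is the containment $\mathbf{x} + B_{\epsilon/2} \subset B_{r + \epsilon/2}$, which is immediate from $\|\mathbf{x}\|_2 \leq r$ and the triangle inequality; the rest is standard. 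I do not anticipate a real obstacle here — this is a textbook volumetric covering-number estimate and the only substantive choice is whether to use open or closed balls of radius $\epsilon/2$ in the packing, which does not affect the final inequality.
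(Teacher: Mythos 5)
Your proof is correct. The paper states this lemma as a standard fact without providing a proof, and your volumetric packing argument (maximal $\epsilon$-separated set, disjoint balls of radius $\epsilon/2$ inside $B_{r+\epsilon/2}$, then using $\epsilon \leq 1$ to pass from $\left(\frac{2r+\epsilon}{\epsilon}\right)^d$ to $\left(\frac{1+2r}{\epsilon}\right)^d$) is exactly the canonical derivation of this bound.
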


\subsection{Proof of Lemma~\ref{lemma::upper_bound_upper_bounding_w_L_norm_unknown1}}\label{section::proof_upper_bound_upper_bounding_w_L_norm_unknown1}

\lemmaupperboundupperboundingwlnormunknownone*

\begin{proof}

Recall that as a result of assumption~\ref{assumption::bounded_parameter} and the definition of $\w_t^L$ we can bound $\| \w_t^L - \w^*\| \leq 2S$. Let $\v$ be such that $\|\v \| \leq 2S$.

Let's consider $\v^\top \widetilde{\mathbf{V}}_t \v $,

\begin{align*}
    \v^\top \widetilde{\mathbf{V}}_t \v  &= \kappa \lambda \| \v \|^2 + \sum_{\ell=1}^{t-1} \left( \left\langle \v , \phi^{\widehat{\mathbb{P}}_\ell}(\pi_\ell^1) - \phi^{\widehat{\mathbb{P}}_\ell}(\pi_\ell^2)  \right\rangle \right)^2  
\end{align*}

Let's focus on a single summand $\left\langle \v , \phi^{\widehat{\mathbb{P}}_\ell}(\pi_\ell^1) - \phi^{\widehat{\mathbb{P}}_\ell}(\pi_\ell^2)  \right\rangle^2  $ with $\ell \in [t-1]$.

By Lemma~\ref{lemma::trajectory_score_concentration1}, with probability at least $1-\frac{\delta'}{4\ell^3}$ for all $\pi \in \Pi$ simultaneously,

\begin{equation*}
    \langle \phi^{\widehat{\mathbb{P}}_\ell}(\pi), \v \rangle \leq  \langle \phi(\pi), \v \rangle + \widehat{B}_\ell\left(\pi, 2SB, \frac{\delta'}{8\ell^3\mathcal{A}^{\mathcal{S}} }\right) .
\end{equation*}
And 
\begin{equation*}
    -\langle \phi^{\widehat{\mathbb{P}}_\ell}(\pi), \v \rangle \leq  -\langle \phi(\pi), \v \rangle + \widehat{B}_\ell\left(\pi, 2SB, \frac{\delta'}{8\ell^3\mathcal{A}^{\mathcal{S}} }\right) .
\end{equation*}

Then

\begin{equation*}
    \left|\langle \phi^{\widehat{\mathbb{P}}_\ell}(\pi_\ell^1) - \phi^{\widehat{\mathbb{P}}_\ell}(\pi_\ell^2), \v \rangle \right| \leq  \left| \langle \phi(\pi_\ell^1) - \phi(\pi_\ell^2), \v \rangle \right| + \widehat{B}_\ell\left(\pi_\ell^1, 2SB, \frac{\delta'}{8\ell^3\mathcal{A}^{\mathcal{S}} }\right) + \widehat{B}_\ell\left(\pi_\ell^2, 2SB, \frac{\delta'}{8\ell^3\mathcal{A}^{\mathcal{S}} }\right) .
\end{equation*}

And therefore,

\begin{equation*}
    \langle \phi^{\widehat{\mathbb{P}}_\ell}(\pi_\ell^1) -\phi^{\widehat{\mathbb{P}}_\ell}(\pi_\ell^2) , \v \rangle^2  \leq   2\langle \phi(\pi_\ell^1) - \phi(\pi_\ell^2), \v \rangle^2 + 4\left(\widehat{B}_\ell\left(\pi_\ell^1, 2SB, \frac{\delta'}{8\ell^3\mathcal{A}^{\mathcal{S}} }\right)\right)^2 + 4\left(\widehat{B}_\ell\left(\pi_\ell^2, 2SB, \frac{\delta'}{8\ell^3\mathcal{A}^{\mathcal{S}} }\right)\right)^2 .
\end{equation*}

Therefore with probability at least $1-\frac{\delta}{2t^2}$, for all $t \in \mathbb{N}$ simultaneously,

\begin{align*}
    \v^\top \widetilde{\mathbf{V}}_t \v  &= \kappa \lambda \| \v \|^2 + \sum_{\ell=1}^{t-1}  \left\langle \v , \phi^{\widehat{\mathbb{P}}_\ell}(\pi_\ell^1) - \phi^{\widehat{\mathbb{P}}_\ell}(\pi_\ell^2)  \right\rangle^2  \\
    &\leq  2\kappa \lambda \| \v \|^2 + 2\sum_{\ell=1}^{t-1}  \left\langle \v , \phi(\pi_\ell^1) - \phi(\pi_\ell^2)  \right\rangle^2 + 4\left(\widehat{B}_t\left(\pi_\ell^1, 2SB, \frac{\delta'}{8\ell^3\mathcal{A}^{\mathcal{S}} }\right)\right)^2 +4\left(\widehat{B}_t\left(\pi_\ell^2, 2SB, \frac{\delta'}{8\ell^3\mathcal{A}^{\mathcal{S}} }\right)\right)^2 \\
    &= 2\v^\top \bar{\mathbf{V}}_t \v + \sum_{\ell=1}^{t-1} 4\left(\widehat{B}_t\left(\pi_\ell^1, 2SB, \frac{\delta'}{8\ell^3\mathcal{A}^{\mathcal{S}} }\right)\right)^2 + 4\left(\widehat{B}_t\left(\pi_\ell^2, 2SB, \frac{\delta'}{8\ell^3\mathcal{A}^{\mathcal{S}} }\right)\right)^2
\end{align*}

Consider an $\epsilon-$cover of the $2S$ ball in $\mathbb{R}^d$ and let's see that for any other $\v $ in the $2S$ ball, the closest vector in the covering $\tilde{\v}$ satisfies, $\| \v - \tilde{\v} \| \leq \epsilon$ and therefore,

\begin{align*}
    \left| \v^\top \widetilde{\mathbf{V}}_t \v  - \tilde{\v}^\top \widetilde{\mathbf{V}}_t \tilde{\v}  \right| &=   \left| \v^\top \widetilde{\mathbf{V}}_t \v - \v^\top \widetilde{\mathbf{V}}_t \tilde{\v}  + \v^\top \widetilde{\mathbf{V}}_t \tilde{\v}   - \tilde{\v}^\top \widetilde{\mathbf{V}}_t \tilde{\v}  \right| \\
    &\leq \left| \v^\top \widetilde{\mathbf{V}}_t \v - \v^\top \widetilde{\mathbf{V}}_t \tilde{\v} \right| + \left|\v^\top \widetilde{\mathbf{V}}_t \tilde{\v}   - \tilde{\v}^\top \widetilde{\mathbf{V}}_t \tilde{\v}  \right|\\
    &\leq \|  \widetilde{\mathbf{V}}_t \v \| \| \v - \tilde{\v}\| + \|  \widetilde{\mathbf{V}}_t \tilde{\v} \| \| \v - \tilde{\v}\| \\
    &\leq \epsilon\left(  \kappa \lambda + 4B^2(t-1)\right)
\end{align*}

Similarly
\begin{equation*}
      \left| \v^\top \bar{\mathbf{V}}_t \v  - \tilde{\v}^\top \bar{\mathbf{V}}_t \tilde{\v}  \right|  \leq \epsilon\left(   \kappa \lambda + 4B^2(t-1) \right) 
\end{equation*}
Invoking Lemma~\ref{lemma::covering_number_unit_ball} and setting $\delta'= \frac{\delta}{\left( \frac{1+4S}{\epsilon}\right)^d}$ applying it to $\w_t^L - \w_* $
\begin{align*}
    \| \w_t^L - \w_* \|^2_{\widetilde{\rmV}_t}  &\leq 2\| \w_t^L - \w_* \|^2_{\bar{\rmV}_t} +  \\
    &\quad \sum_{\ell=1}^{t-1} 4\left(\widehat{B}_t\left(\pi_\ell^1, 2SB, \frac{\delta'}{8\ell^3\mathcal{A}^{\mathcal{S}} }\right)\right)^2 + 4\left(\widehat{B}_t\left(\pi_\ell^2, 2SB, \frac{\delta'}{8\ell^3\mathcal{A}^{\mathcal{S}} }\right)\right)^2 + 2\epsilon\left(   \kappa \lambda + 4B^2(t-1) \right) .
\end{align*}

Setting $\epsilon = \frac{1}{t^2\kappa \lambda + 4B^2t^3 }$ and using the fact that all $a,b,c\geq 0$ we have $\sqrt{a^2+b^2 + c^2} \leq a+b+c$, 

\begin{equation*}
        \| \w_t^L - \w_* \|_{\widetilde{\rmV}_t}  \leq \sqrt{2}\| \w_t^L - \w_* \|_{\bar{\rmV}_t} +  \sqrt{\sum_{\ell=1}^{t-1} 4\left(\widehat{B}_t\left(\pi_\ell^1, 2SB, \frac{\delta'}{8\ell^3\mathcal{A}^{\mathcal{S}} }\right)\right)^2 + 4\left(\widehat{B}_t\left(\pi_\ell^2, 2SB, \frac{\delta'}{8\ell^3\mathcal{A}^{\mathcal{S}} }\right)\right)^2  } + \frac{1}{t}.
\end{equation*}

Since the inequality holds for any $t \in \mathbb{N}$ with probability at least $1-\frac{\delta}{2t^2}$, by the union bound, the inequality holds for all $t \in \mathbb{N}$ simultaneously with probability at least $1-\delta$.

\end{proof}

\subsection{Proof of Lemma~\ref{lem::contains-optimal_unknown}}\label{section:proof_lemma_pi_star_containment}

\lemcontainsoptimalunknown*

\begin{proof}

Let's start by conditioning on $\mathcal{E}_\delta$, $\bar{\mathcal{E}}_{0}$ and $\mathcal E_2$ (see Corollary~\ref{corollary::V_bar_bound} for a definition of $\mathcal{E}_2$). By Lemmas~\ref{lemma::confidence_interval_anytime} and~\ref{lemma::upper_bound_upper_bounding_w_L_norm_unknown1}, $\mathbb{P}( \mathcal{E}_{\delta} \cap \bar{\mathcal{E}}_{0} \cap \mathcal{E}_2 ) \geq 1-4\delta$ 

By definition of $\pi_*$, $(\phi(\pi^*) - \phi(\pi) )^\top \w^* \geq 0$ for any arbitrary $\pi$. Therefore,

\begin{equation*}
    0 \leq (\phi(\pi^*) - \phi(\pi) )^\top \w^*
\end{equation*}

By Lemma~\ref{lemma::trajectory_score_concentration1}, with probability at least $1-\delta$ for all $\pi_1, \pi_2 \in \Pi$ simultaneously and all $t \in \mathbb{N}$

\begin{equation}
    (\phi(\pi_1) - \phi(\pi_2) )^\top \w^* \leq (\phi^{\widehat{\mathbb{P}}_t} (\pi_1) - \phi^{\widehat{\mathbb{P}}_t}(\pi_2) )^\top \w^* + \widehat{B}_t\left(\pi_1, 2SB, \frac{\delta}{2\mathcal{A}^{\mathcal{S}}}\right) + \widehat{B}_t\left(\pi_2, 2SB, \frac{\delta}{\mathcal{A}^{2\mathcal{S}}}\right), \label{equation::supporting_upper_bound_0}
\end{equation}

In particular this implies that with probability at least $1-\delta$ for $\pi^*$ and any $\pi$, 

\begin{equation}\label{equation::supporting_upper_bound_2}
0 \leq        (\phi(\pi^*) - \phi(\pi) )^\top \w^* \leq (\phi^{\widehat{\mathbb{P}}_t} (\pi^*) - \phi^{\widehat{\mathbb{P}}_t}(\pi) )^\top \w^* + \widehat{B}_t\left(\pi^*, 2SB, \frac{\delta}{2\mathcal{A}^{\mathcal{S}}}\right) + \widehat{B}_t\left(\pi, 2SB, \frac{\delta}{\mathcal{A}^{2\mathcal{S}}}\right),
\end{equation}

Let's bound the term $(\phi^{\widehat{\mathbb{P}}_t} (\pi^*) - \phi^{\widehat{\mathbb{P}}_t}(\pi) )^\top \w^* $

\begin{align}
    (\phi^{\widehat{\mathbb{P}}_t} (\pi^*) - \phi^{\widehat{\mathbb{P}}_t}(\pi) )^\top \w^*  &= (\phi^{\widehat{\mathbb{P}}_t} (\pi^*) - \phi^{\widehat{\mathbb{P}}_t}(\pi) )^\top \w_t^L + (\phi^{\widehat{\mathbb{P}}_t} (\pi^*) - \phi^{\widehat{\mathbb{P}}_t}(\pi) )^\top (\w^* - \w_t^L) \notag \\
    &\leq (\phi^{\widehat{\mathbb{P}}_t} (\pi^*) - \phi^{\widehat{\mathbb{P}}_t}(\pi) )^\top \w_t^L + \| \phi^{\widehat{\mathbb{P}}_t} (\pi^*) - \phi^{\widehat{\mathbb{P}}_t}(\pi) \|_{\widetilde{\mathbf{V}}^{-1}_t} \| \w^* - \w_t^L \|_{\widetilde{\mathbf{V}}_t} \label{equation::supporting_upper_bound_1}
\end{align}

Since $\bar{\mathcal{E}}_0$ holds, by Lemma~\ref{lemma::upper_bound_upper_bounding_w_L_norm_unknown1}

\begin{equation*}
        \| \w_t^L - \w_* \|_{\widetilde{\rmV}_t}  \leq \sqrt{2}\| \w_t^L - \w_* \|_{\bar{\rmV}_t} +  \sqrt{\sum_{\ell=1}^{t-1} 4\left(\widehat{B}_t\left(\pi, 2SB, \frac{\delta'}{8\ell^3\mathcal{A}^{\mathcal{S}} }\right)\right)^2 } + \frac{1}{t}.
\end{equation*}

Since $\mathcal{E}_\delta \cap \mathcal E_2$ is assumed to hold Corollary~\ref{corollary::V_bar_bound} implies that $ \| \w_t^L - \w_* \|_{\widetilde{\rmV}_t} \leq 4\kappa \beta_t(\delta) + \alpha_{d, T}(\delta)$ and therefore, 

\begin{equation*}
        \| \w_t^L - \w_* \|_{\widetilde{\rmV}_t}  \leq \sqrt{2} \left( 4\kappa \beta_t(\delta) + \alpha_{d, T}(\delta)  \right) +  \sqrt{\sum_{\ell=1}^{t-1} 4\left(\widehat{B}_t\left(\pi, 2SB, \frac{\delta'}{8\ell^3\mathcal{A}^{\mathcal{S}} }\right)\right)^2 } + \frac{1}{t}.
\end{equation*}

Since $\gamma_t = \sqrt{2} \left( 4\kappa \beta_t(\delta) + \alpha_{d, T}(\delta)  \right) +  \sqrt{\sum_{\ell=1}^{t-1} 4\left(\widehat{B}_t\left(\pi, 2SB, \frac{\delta'}{8\ell^3\mathcal{A}^{\mathcal{S}} }\right)\right)^2 } + \frac{1}{t}$, combining these results with Equations~\ref{equation::supporting_upper_bound_1} and~\ref{equation::supporting_upper_bound_2} yields,

\begin{equation*}
    0 \leq (\phi^{\widehat{\mathbb{P}}_t} (\pi^*) - \phi^{\widehat{\mathbb{P}}_t}(\pi) )^\top \w_t^L +\gamma_t  \| \phi^{\widehat{\mathbb{P}}_t} (\pi^*) - \phi^{\widehat{\mathbb{P}}_t}(\pi) \|_{\widetilde{\mathbf{V}}^{-1}_t}    +  \widehat{B}_t\left(\pi^*, 2SB, \frac{\delta}{2\mathcal{A}^{\mathcal{S}}}\right) + \widehat{B}_t\left(\pi, 2SB, \frac{\delta}{\mathcal{A}^{2\mathcal{S}}}\right),
\end{equation*}

Thus implying $\pi^* \in \Pi_t$. Taking a union bound between $\mathcal{E}_{\delta} \cap \bar{\mathcal{E}}_{0} \cap \mathcal{E}_2$ and the $1-\delta$ probability event from Equation~\ref{equation::supporting_upper_bound_1} yields the result. 

\end{proof}

\subsection{Proof of Lemma~\ref{lemma::regret_bound_unknown_support}}\label{section::proof_regret_bound_unknown_support}

Full version of Lemma~\ref{lemma::regret_bound_unknown_support},
\begin{restatable}{lemma}{lemmaregretboundunknownsupportappendix}\label{lemma::regret_bound_unknown_support_appendix}
If $\mathcal{E}_{\delta} \cap \mathcal{E}_2 \cap \bar{\mathcal{E}}_0 \cap \bar{\mathcal{E}}_{-1} \cap \bar{\mathcal{E}}_{2}$ the regret is bounded by,
\begin{align*}
    R_T &\leq 2\gamma_T \sqrt{2Td\log\left( 1 + \frac{TB}{d}\right) } + \\
    &\quad \sum_{t \in [T]} 4\widehat{B}_t(\pi_t^1, 4WB, \delta) + 4\widehat{B}_t(\pi_t^2, 4WB, \delta)
\end{align*}
\end{restatable}


\begin{proof}

We first condition on $ \mathcal{E}_\delta \cap \mathcal{E}_2 \cap \bar{\mathcal{E}}_0 \cap \bar{\mathcal{E}}_{-1}$. Let's start by showing the following bound on the instantaneous regret,
\begin{align*}
     2 r_t &\leq 2\gamma_t  \| \phi^{\widehat{\mathbb{P}}_t}(\pi_t^1) - \phi^{\widehat{\mathbb{P}}_t}(\pi_t^2) \|_{\widetilde \V_t^{-1}}  + 4\widehat{B}_t(\pi_t^1, 4SB, \delta) + 4\widehat{B}_t(\pi_t^2, 4SB, \delta) 
\end{align*}
Since we are conditioning on $\bar{\mathcal{E}}_2$, by Lemma~\ref{lemma::first_unknown_model_lemma} follows that for all $t$,
\begin{align*}
 2r_t  &\leq  (\phi^{\widehat{\mathbb{P}}_t}(\pi^*) - \phi^{\widehat{\mathbb{P}}_t}(\pi_t^1))^\top \w^* + (\phi^{\widehat{\mathbb{P}}_t}(\pi^*) - \phi^{\widehat{\mathbb{P}}_t}(\pi_t^2))^\top \w^* +\widehat{B}_t(\pi^*, 4SB, \delta) +  \widehat{B}_t(\pi_t^1, 2SB, \delta)  + \widehat{B}_t(\pi_t^2, 2SB, \delta).
\end{align*}
Let's focus on bounding the term $(\phi^{\widehat{\mathbb{P}}_t}(\pi^*) - \phi^{\widehat{\mathbb{P}}_t}(\pi_t^1))^\top \w^* + (\phi^{\widehat{\mathbb{P}}_t}(\pi^*) - \phi^{\widehat{\mathbb{P}}_t}(\pi_t^2))^\top \w^*$.
\begin{align}
    (\phi^{\widehat{\mathbb{P}}_t}(\pi^*) - \phi^{\widehat{\mathbb{P}}_t}(\pi_t^1))^\top \w^* + (\phi^{\widehat{\mathbb{P}}_t}(\pi^*)& - \phi^{\widehat{\mathbb{P}}_t}(\pi_t^2))^\top \w^*  = (\phi^{\widehat{\mathbb{P}}_t}(\pi^*) - \phi^{\widehat{\mathbb{P}}_t}(\pi_t^1))^\top \w^L_t + (\phi^{\widehat{\mathbb{P}}_t}(\pi^*) - \phi^{\widehat{\mathbb{P}}_t}(\pi_t^1))^\top (\w^* - \w_t^L) \notag \\
    &\quad + (\phi^{\widehat{\mathbb{P}}_t}(\pi^*) - \phi^{\widehat{\mathbb{P}}_t}(\pi_t^2))^\top \w^L_t + (\phi^{\widehat{\mathbb{P}}_t}(\pi^*) - \phi(\pi_t^2))^\top (\w^* - \w^L_t) \notag   \\
& \leq (\phi^{\widehat{\mathbb{P}}_t}(\pi^*) - \phi^{\widehat{\mathbb{P}}_t}(\pi_t^1))^\top \w^L_t + (\phi^{\widehat{\mathbb{P}}_t}(\pi^*) - \phi^{\widehat{\mathbb{P}}_t}(\pi_t^2))^\top \w^L_t  \notag\\
& \quad + \|\w^* - \w^L_t \|_{\widetilde \V_t} \cdot \| \phi^{\widehat{\mathbb{P}}_t}(\pi^*) - \phi^{\widehat{\mathbb{P}}_t}(\pi_t^1) \|_{\widetilde \V_t^{-1}} + \notag \\
&\quad \|\w^* - \w^L_t \|_{\widetilde \V_t} \cdot \| \phi^{\widehat{\mathbb{P}}_t}(\pi^*) - \phi^{\widehat{\mathbb{P}}_t}(\pi_t^2) \|_{\widetilde \V_t^{-1}} \label{equation::support_equation_lemma_regret_bound_unknown_support}
\end{align}
Since $\mathcal{E}_\delta \cap \mathcal{E}_2 \cap \bar{\mathcal{E}}_0$ holds, the last two terms in the sum above can be bounded using Lemma~\ref{lemma::upper_bound_upper_bounding_w_L_norm_unknown1} and Corollary~\ref{corollary::V_bar_bound} by 
\begin{align}
    \|\w^* - \w^L_t \|_{\widetilde \V_t} \cdot \| \phi^{\widehat{\mathbb{P}}_t}(\pi^*) - \phi^{\widehat{\mathbb{P}}_t}(\pi_t^1) \|_{\widetilde \V_t^{-1}} + \|\w^* - \w^L_t \|_{\widetilde \V_t} \cdot \| \phi^{\widehat{\mathbb{P}}_t}(\pi^*) - \phi^{\widehat{\mathbb{P}}_t}(\pi_t^2) \|_{\widetilde \V_t^{-1}} \notag \\
    \leq  \gamma_t  \left(  \| \phi^{\widehat{\mathbb{P}}_t}(\pi^*) - \phi^{\widehat{\mathbb{P}}_t}(\pi_t^1) \|_{\widetilde \V_t^{-1}}  +  \| \phi^{\widehat{\mathbb{P}}_t}(\pi^*) - \phi^{\widehat{\mathbb{P}}_t}(\pi_t^2) \|_{\widetilde \V_t^{-1}}    \right) \notag 
\end{align}
Where $$\gamma_t = \sqrt{2} \left( 4\kappa \beta_t(\delta) + \alpha_{d, T}(\delta)  \right) +  \sqrt{\sum_{\ell=1}^{t-1} 4\left(\widehat{B}_t\left(\pi_\ell^1, 2SB, \frac{\delta'}{8\ell^3\mathcal{A}^{\mathcal{S}} }\right)\right)^2 + 4\left(\widehat{B}_t\left(\pi_\ell^2, 2SB, \frac{\delta'}{8\ell^3\mathcal{A}^{\mathcal{S}} }\right)\right)^2  } + \frac{1}{t}.$$

The first two terms on the right hand side of inequality~\ref{equation::support_equation_lemma_regret_bound_unknown_support} leverage the optimistic bonus, using the fact that $\pi_t^1, \pi_t^2 \in \Pi_t$ and therefore,
\begin{align*}
    (\phi^{\widehat{\mathbb{P}}_t}(\pi^*) - \phi^{\widehat{\mathbb{P}}_t}(\pi_t^1))^\top \w^L_t + (\phi^{\widehat{\mathbb{P}}_t}(\pi^*) - \phi^{\widehat{\mathbb{P}}_t}(\pi_t^2))^\top \w^L_t  +  \widehat{B}_t(\pi^*, 4SB, \delta) +  \widehat{B}_t(\pi_t^1, 2SB, \delta)  +  \widehat{B}_t(\pi_t^2, 2SB, \delta) \\
    \leq  \gamma_t\left(    \| \phi^{\widehat{\mathbb{P}}_t}(\pi^*) - \phi^{\widehat{\mathbb{P}}_t}(\pi_t^1) \|_{\widetilde \V_t^{-1}}  +  \| \phi^{\widehat{\mathbb{P}}_t}(\pi^*) - \phi^{\widehat{\mathbb{P}}_t}(\pi_t^2) \|_{\widetilde \V_t^{-1}}      \right)    +   3\widehat{B}_t(\pi^*, 4SB, \delta) + \widehat{B}_t(\pi_t^1, 4SB, \delta) + \widehat{B}_t(\pi_t^2, 4SB, \delta) \\
 \leq   \gamma_t\left(    \| \phi^{\widehat{\mathbb{P}}_t}(\pi^*) - \phi^{\widehat{\mathbb{P}}_t}(\pi_t^1) \|_{\widetilde \V_t^{-1}}  +  \| \phi^{\widehat{\mathbb{P}}_t}(\pi^*) - \phi^{\widehat{\mathbb{P}}_t}(\pi_t^2) \|_{\widetilde \V_t^{-1}}      \right)    +   4\widehat{B}_t(\pi^*, 4SB, \delta) + 2\widehat{B}_t(\pi_t^1, 4SB, \delta) + 2\widehat{B}_t(\pi_t^2, 4SB, \delta)
\end{align*}
Putting these together we can conclude that,
\begin{align*}
   2 r_t &\leq 2\gamma_t\left(    \| \phi^{\widehat{\mathbb{P}}_t}(\pi^*) - \phi^{\widehat{\mathbb{P}}_t}(\pi_t^1) \|_{\widetilde \V_t^{-1}}  +  \| \phi^{\widehat{\mathbb{P}}_t}(\pi^*) - \phi^{\widehat{\mathbb{P}}_t}(\pi_t^2) \|_{\widetilde \V_t^{-1}}      \right)    +  \\
   &\quad 4\widehat{B}_t(\pi^*, 4SB, \delta) + 2\widehat{B}_t(\pi_t^1, 4SB, \delta) + 2\widehat{B}_t(\pi_t^2, 4SB, \delta)
\end{align*}
Recall that whenever $\bar{\mathcal{E}}_{-1}$ holds, $\pi^* \in \Pi_t$ and that as a result of how $\pi_t^1,\pi_t^2$ are chosen (see Algorithm~\ref{alg:urrl})
\begin{align*}
     2 r_t &\leq 2\gamma_t  \| \phi^{\widehat{\mathbb{P}}_t}(\pi_t^1) - \phi^{\widehat{\mathbb{P}}_t}(\pi_t^2) \|_{\widetilde \V_t^{-1}}  + 4\widehat{B}_t(\pi_t^1, 4SB, \delta) + 4\widehat{B}_t(\pi_t^2, 4SB, \delta) 
\end{align*}
The regret is therefore upper bounded by,
\begin{align*}
    R_T &= \sum_{t \in [T]} 2r_t\\
    &\leq \sum_{t \in [T]} 2\gamma_t  \| \phi^{\widehat{\mathbb{P}}_t}(\pi_t^1) - \phi^{\widehat{\mathbb{P}}_t}(\pi_t^2) \|_{\widetilde \V_t^{-1}}  + 4\widehat{B}_t(\pi_t^1, 4SB, \delta) + 4\widehat{B}_t(\pi_t^2, 4SB, \delta) \\
    & \leq 2\gamma_T \sqrt{ T \sum_{t \in [T]}  \| \phi^{\widehat{\mathbb{P}}_t}(\pi_t^1) - \phi^{\widehat{\mathbb{P}}_t}(\pi_t^2) \|^2_{\widetilde \V_t^{-1}}    } + 4\widehat{B}_t(\pi_t^1, 4SB, \delta) + 4\widehat{B}_t(\pi_t^2, 4SB, \delta) \\
    &\leq 2\gamma_T \sqrt{2Td\log\left( 1 + \frac{TB}{d}\right) } + \sum_{t \in [T]} 4\widehat{B}_t(\pi_t^1, 4SB, \delta) + 4\widehat{B}_t(\pi_t^2, 4SB, \delta)
\end{align*}
Where the last inequality follows from Lemma~\ref{lem::determinant}. 

\end{proof}

\section{Miscelaneous Technical Lemmas}

We will make use of the following Lemmas

\begin{lemma}[Hoeffding Inequality]\label{lemma::matingale_concentration_anytime}
Let $\{x_t\}_{t=1}^\infty$ be a martingale difference sequence with $| x_t | \leq \zeta$ and let $\delta \in (0,1]$. Then with probability $1-\delta$ for all $T \in \mathbb{N}$
\begin{equation*}
    \sum_{t=1}^T x_t \leq 2\zeta \sqrt{T \ln \left(\frac{6\ln T}{\delta}\right) }.
\end{equation*}
\end{lemma}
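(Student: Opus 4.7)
The plan is to establish this time-uniform Hoeffding-type bound via a dyadic peeling (stitching) argument combined with Doob's maximal inequality applied to an exponential supermartingale. This is the standard route to converting a fixed-time Azuma--Hoeffding bound into an anytime bound at the cost of an iterated-logarithm factor inside the logarithm, matching the $\ln\ln T$ dependence visible in the stated bound.

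\textbf{Step 1 (Maximal Azuma--Hoeffding).} Since $\{x_t\}$ is a martingale difference sequence with $|x_t|\leq \zeta$, Hoeffding's lemma gives $\mathbb{E}[e^{\lambda x_t}\mid \mathcal{F}_{t-1}]\leq e^{\lambda^2\zeta^2/2}$ for every $\lambda>0$. Consequently, for each fixed $\lambda>0$ the process $M^{\lambda}_N := \exp\!\bigl(\lambda\sum_{t=1}^N x_t - \lambda^2\zeta^2 N/2\bigr)$ is a non-negative supermartingale. Applying Doob's maximal inequality to $M^{\lambda}_N$ and optimizing over $\lambda$ yields, for any fixed $N\in\mathbb{N}$ and any $u>0$,
$$\mathbb{P}\Bigl(\max_{T\leq N}\,\textstyle\sum_{t=1}^T x_t \,\geq\, u\Bigr) \;\leq\; \exp\!\Bigl(-\tfrac{u^2}{2\zeta^2 N}\Bigr).$$

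\textbf{Step 2 (Peeling over dyadic blocks).} Partition $\mathbb{N}$ into $I_k = [2^k, 2^{k+1})$ for $k\geq 0$. For each $k$, I apply Step~1 with $N=2^{k+1}$ and deviation level
$$u_k \,:=\, \zeta\sqrt{2\cdot 2^{k+1}\,\ln\!\bigl(c_k/\delta\bigr)},$$
where the sequence $c_k$ is chosen so that the failure probabilities sum to $\delta$. The concrete choice $c_k = 2(k+1)^2$ works, since $\sum_{k\geq 0} 1/(2(k+1)^2) = \pi^2/12 < 1$. A union bound over $k\geq 0$ therefore yields that, with probability at least $1-\delta$, simultaneously for every $k$ and every $T \leq 2^{k+1}$,
$$\sum_{t=1}^T x_t \;\leq\; \zeta\sqrt{4\cdot 2^{k}\,\ln\!\bigl(2(k+1)^2/\delta\bigr)}.$$

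\textbf{Step 3 (Translation to the claimed form).} For any $T\geq 2$, choose $k=\lfloor\log_2 T\rfloor$ so that $T\in I_k$, $2^k\leq T$, and $k+1\leq \log_2 T + 1$. Substituting into the bound from Step~2 gives
$$\sum_{t=1}^T x_t \;\leq\; 2\zeta\sqrt{T\,\ln\!\bigl(2(k+1)^2/\delta\bigr)}.$$
Elementary estimates ($\log_2 T + 1 \leq C\ln T$ for an absolute constant $C$, so $2(k+1)^2 \leq C'\ln^2 T$, which in turn can be absorbed inside the logarithm) let me replace the argument of the logarithm by $6\ln T/\delta$ at the cost of constants. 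This matches the stated bound $2\zeta\sqrt{T\ln(6\ln T/\delta)}$. The edge case $T=1$ is trivial since $|x_1|\leq \zeta$, and the bound holds directly.

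\textbf{Main obstacle.} The peeling technique and the use of the exponential supermartingale are entirely standard; the only real bookkeeping is matching the precise constants ($2$ in front and $6$ inside the log) in Step~3. This is an artifact of the peeling base ($2^k$ vs.\ a more refined geometric sequence $(1+\eta)^k$) and of the choice of $c_k$; a slightly looser constant in front or an additional multiplicative $\ln\ln T$ inside the log would not affect any downstream use in the paper but, with careful tuning of the peeling base, the exact stated constants can be recovered.
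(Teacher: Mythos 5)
Your proof takes a genuinely different route from the paper's. The paper does not prove this lemma from scratch: it simply normalizes $x_t/\zeta$ and invokes the time-uniform Hoeffding-style inequality of Howard et al.\ (2018, Equation (11)), which gives $\sum_{t\le T} x_t \le 1.7\zeta\sqrt{T(\log\log T + 0.72\log(5.2/\delta))}$, and then rounds the constants up to the stated form $2\zeta\sqrt{T\log(6\log T/\delta)}$. You instead reconstruct the underlying stitching argument yourself: exponential supermartingale plus Ville/Doob for a fixed horizon, then dyadic peeling with a summable failure-probability sequence. Your Steps 1 and 2 are correct (the block-$k$ failure probability is exactly $\delta/c_k$ and $\sum_k 1/(2(k+1)^2)=\pi^2/12<1$), and this is essentially the proof of the black-box result the paper cites, so your version has the virtue of being self-contained.

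The one substantive issue is in Step 3. With base-$2$ peeling and $c_k=2(k+1)^2$ you obtain $\sum_{t\le T}x_t \le 2\zeta\sqrt{T\ln\bigl(2(\lfloor\log_2 T\rfloor+1)^2/\delta\bigr)}$, whose logarithm behaves like $2\ln\ln T + \ln(C/\delta)$, whereas the target logarithm $\ln(6\ln T/\delta)$ behaves like $\ln\ln T+\ln(6/\delta)$. For large $T$ and fixed $\delta$ your bound therefore exceeds the stated one by a factor approaching $\sqrt{2}$; the claim that a $(\ln T)^2$ can be ``absorbed inside the logarithm'' at the cost of constants is not literally true for the coefficient of the $\ln\ln T$ term. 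You flag this honestly and correctly note that it is harmless for every downstream use (Howard et al.'s own bound carries a $1.4$ in front of $\ln\ln$), and that a peeling base $\eta<2$ with $c_k\sim k^{1+\epsilon}$ can bring $\sqrt{2\eta(1+\epsilon)}$ below $2$ — but that tuning is asserted rather than carried out, so as written your argument proves the lemma only up to a $\sqrt{2}$ degradation of the stated constants. (A minor separate remark: at $T=1$ the right-hand side of the lemma is undefined since $\ln 1=0$; this is a defect of the statement itself, inherited from the paper, not of your proof.)
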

\begin{proof}Observe that $\frac{\left|x_t\right|}{\zeta}\le 1$. By invoking a time-uniform Hoeffding-style concentration inequality \citep[][Equation~(11)]{howard2018time} we find that
\begin{align*}
    \Pr\left[\forall \;  t \in \N\;:\; \sum_{t=1}^{T} \frac{x_t}{\zeta} \le 1.7\sqrt{T \left(\log\log(T)+0.72\log\left(\frac{5.2}{\delta}\right)\right)} \right]\ge 1-\delta.
\end{align*}
Rounding up the constants for the sake of simplicity we get
\begin{align*}
      \Pr\left[\forall \; t \in \N\;:\; \sum_{t=1}^{T} x_t \le 2\zeta\sqrt{T \left(\log\left(\frac{6\log(T)}{\delta}\right)\right)} \right]\ge 1-\delta,  
\end{align*}
which establishes our claim.
\end{proof}

\begin{lemma}[Uniform empirical Bernstein bound]
\label{lem:uniform_emp_bernstein}
In the terminology of \citet{howard2018uniform}, let $S_t = \sum_{i=1}^t Y_i$ be a sub-$\psi_P$ process with parameter $c > 0$ and variance process $W_t$. Then with probability at least $1 - \delta$ for all $t \in \mathbb{N}$
\begin{align*}
    S_t &\leq  1.44 \sqrt{\max(W_t , m) \left( 1.4 \ln \ln \left(2 \left(\max\left(\frac{W_t}{m} , 1 \right)\right)\right) + \ln \frac{5.2}{\delta}\right)}\\
   & \qquad + 0.41 c  \left( 1.4 \ln \ln \left(2 \left(\max\left(\frac{W_t}{m} , 1\right)\right)\right) + \ln \frac{5.2}{\delta}\right)
\end{align*}
where $m > 0$ is arbitrary but fixed.
\end{lemma}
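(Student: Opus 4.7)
My plan is to invoke the time-uniform stitching machinery of Howard, Ramdas, McAuliffe, and Sekhon, which is precisely designed for sub-$\psi_P$ processes; the lemma is actually a repackaging of their empirical-Bernstein confidence sequence. The starting observation is that, by the very definition of a sub-$\psi_P$ process with parameter $c$ and variance process $W_t$, the family $M_t(\lambda) = \exp(\lambda S_t - \psi_P(\lambda) W_t)$ is a nonnegative supermartingale for every $\lambda$ in the admissible interval (here $\psi_P(\lambda) = -\log(1-c\lambda)-c\lambda$). Ville's inequality then yields, for each fixed $\lambda$, the line-crossing guarantee $\Pr[\exists t : \lambda S_t \ge \psi_P(\lambda) W_t + \log(1/\delta)] \le \delta$, which rearranges to a ``fixed-$\lambda$'' bound on $S_t$ that behaves like $\sqrt{2 W_t \log(1/\delta)} + c \log(1/\delta)$ when $\lambda$ is tuned optimally for a known target value of $W_t$.

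The obstacle is that $W_t$ is random and grows with $t$, so no single $\lambda$ is optimal uniformly. To circumvent this, the plan is to carry out geometric stitching: partition the range $[m,\infty)$ of possible values of $W_t$ into dyadic shells $[m\cdot 2^k, m\cdot 2^{k+1})$, tune a separate $\lambda_k$ for each shell (choosing $\lambda_k \asymp 1/\sqrt{m 2^k}$), and allocate a shrinking failure probability $\delta_k \propto \delta/(k+1)^{1.4}$ so that $\sum_k \delta_k \le \delta$ (the exponent $1.4$ is what produces the characteristic $1.4\ln\ln$ rate in the final bound, together with the normalizing constant $5.2$). A union bound over shells then guarantees that on the event $\{W_t \in [m\cdot 2^k, m\cdot 2^{k+1})\}$ (with the convention that $W_t < m$ is absorbed into the $k=0$ slice), $S_t$ is controlled by the shell-specific fixed-$\lambda$ bound evaluated at scale $\max(W_t,m)$, giving the claimed $\sqrt{\max(W_t,m)(1.4\ln\ln(\cdot) + \ln(5.2/\delta))}$ and the linear $c$-term.

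Concretely, I would: (i) verify that the sub-$\psi_P$ hypothesis coincides with Assumption~1 of Howard--Ramdas--McAuliffe--Sekhon so that their boundary-crossing Theorem~1 applies to $(S_t, W_t)$; (ii) instantiate their ``polynomial stitching'' corollary (Corollary~22 / Theorem~4 of the uniform confidence-sequence paper) with the polynomial-log mixture boundary $u\mapsto 1.44\sqrt{u\,\ell(u)} + 0.41\, c\, \ell(u)$ where $\ell(u) = 1.4\ln\ln(2\max(u/m,1)) + \ln(5.2/\delta)$; (iii) check that their boundary's optimization over the geometric mesh yields exactly the stated multiplicative constants, which is where most of the bookkeeping lives (the $1.44$ and $0.41$ arise from the specific dyadic ratio and the polynomial exponent). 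The main technical nuisance is not the probabilistic step, which is just Ville plus a union bound, but the arithmetic that pins down the explicit constants $1.44$, $1.4$, $5.2$, and $0.41$; those follow from solving a one-dimensional minimization per shell and then optimizing the stitching exponent, exactly as done in Section 3 of Howard et al. Once that calculation is reproduced verbatim, the lemma follows by a direct application to the process $(S_t, W_t)$ given by the hypothesis.
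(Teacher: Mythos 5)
Your proposal is correct and follows essentially the same route as the paper, which does not prove this lemma at all but imports it directly from \citet{howard2018uniform}: the statement is exactly their polynomial-stitched boundary (Ville's inequality for the exponential supermartingale $\exp(\lambda S_t - \psi(\lambda)W_t)$, dyadically stitched over shells of $\max(W_t,m)$ with $\eta=2$ and $s=1.4$, which is where the constants $1.44$, $0.41$, $1.4\ln\ln$, and $5.2$ come from). One minor slip: the function you wrote, $-\log(1-c\lambda)-c\lambda$, is (up to the $c^{-2}$ scaling) Howard et al.'s exponential $\psi_E$ rather than the Poisson $\psi_P(\lambda)=c^{-2}(e^{c\lambda}-c\lambda-1)$, but since both are dominated by the gamma function $\psi_G$ for which the stitched boundary with these constants is derived, the argument goes through unchanged.
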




\end{document}